\documentclass[10pt, twoside]{article}
%\usepackage{setspace}
%\onehalfspacing
\usepackage[margin=1in]{geometry}

\usepackage[utf8]{inputenc} % allow utf-8 input
\usepackage[T1]{fontenc}    % use 8-bit T1 fonts
\usepackage{url}            % simple URL typesetting
\usepackage{booktabs}       % professional-quality tables
\usepackage{amsfonts}       % blackboard math symbols
\usepackage{nicefrac}       % compact symbols for 1/2, etc.
\usepackage{microtype}      % microtypography

%%%%%%%%%%%%%%%%%%%%%%%%%%%%%%%%%%%%%%%%%%%MY PACKAGES%%%%%%%%%%%%%%%%%%%%%%%%%%%%%%5

\usepackage{subfig}

\usepackage{amsfonts,amsmath,amssymb,amsthm,bm,bbm}

\usepackage{algorithm,algpseudocode}
\usepackage{wrapfig}
\usepackage{dsfont,array}

\usepackage{newclude}

\usepackage{appendix}

\usepackage[colorlinks,
linkcolor=blue,
citecolor=blue,
urlcolor=magenta]{hyperref}

\usepackage{multirow}
\usepackage{multicol}
\usepackage{xspace}
\usepackage{mathtools}
\usepackage{placeins}
\usepackage{authblk}

\newtheorem{definition}{Definition}
\newtheorem{assumption}{Assumption}
\newtheorem{theorem}{Theorem}
\newtheorem{lemma}{Lemma}

\newcommand{\indic}{1} \newcommand{\xyz}{u}
\newcommand{\cG}{\mathcal{G}} \newcommand{\ignore}[1]{}

\newcommand\numberthis{\addtocounter{equation}{1}\tag{\theequation}}

\DeclarePairedDelimiterX\Basics[1](){ #1}

\newcommand{\norm}[1]{\left\lVert#1\right\rVert}
\usepackage{centernot}

\newcounter{const-no}

\newcommand\independent{\protect\mathpalette{\protect\independenT}{\perp}}
\def\independenT#1#2{\mathrel{\rlap{$#1#2$}\mkern2mu{#1#2}}}

\def\EE{{\mathbb{E}}}\def\PP{{\mathbb{P}}}

\DeclareMathOperator*{\argmin}{\arg\!\min}

%########################################END MY PACKAGES######################################

% The \author macro works with any number of authors. There are two
% commands used to separate the names and addresses of multiple
% authors: \And and \AND.
%
% Using \And between authors leaves it to LaTeX to determine where to
% break the lines. Using \AND forces a line break at that point. So,
% if LaTeX puts 3 of 4 authors names on the first line, and the last
% on the second line, try using \AND instead of \And before the third
% author name.

%%%%%%%%%%%%labeleing#########
\makeatletter
\newcommand{\setword}[2]{%
	\phantomsection
	#1\def\@currentlabel{\unexpanded{#1}}\label{#2}%
}
\makeatother

\newcommand\blfootnote[1]{%
	\begingroup
	\renewcommand\thefootnote{}\footnote{#1}%
	\addtocounter{footnote}{-1}%
	\endgroup
}

%%%%%%%%%%%%%%%%%%%%%%%%%%%%%%%%%%%%%%

\date{\today}
\title{Model-Powered Conditional Independence Test}

\author[1,*]{Rajat Sen}
\author[2,*]{Ananda Theertha Suresh}
\author[3,*]{Karthikeyan Shanmugam}
\author[1]{Alexandros G. Dimakis}
\author[1]{Sanjay Shakkottai}
\affil[1]{The University of Texas at Austin}
\affil[2]{Google, New York}
\affil[3]{IBM Research, Thomas J. Watson Center}
%\affil[*]{denotes equal contribution}

\usepackage[parfill]{parskip}
\begin{document}
	\maketitle 
\begin{abstract}
  We consider the problem of non-parametric Conditional Independence
  testing (CI testing) for continuous random variables. Given
  i.i.d samples from the joint distribution $f(x,y,z)$ of continuous
  random vectors $X,Y$ and $Z,$ we determine whether
  $X \independent Y \vert Z$. We approach this by converting the conditional independence test
  into a classification problem.  This allows us to harness very
  powerful classifiers like gradient-boosted trees and deep neural
  networks.  These models can handle complex probability distributions
  and allow us to perform significantly better compared to the prior
  state of the art, for high-dimensional CI testing. The main technical challenge in the classification problem is the
  need for samples from the conditional product distribution
  $f^{CI}(x,y,z) = f(x|z)f(y|z)f(z)$ -- the joint distribution if and
  only if $X \independent Y \vert Z.$ -- when given access only to
  i.i.d.  samples from the true joint distribution $f(x,y,z)$.  To
  tackle this problem we propose a novel nearest neighbor bootstrap
  procedure and theoretically show that our generated samples are
  indeed close to $f^{CI}$ in terms of total variational distance.
  We then develop theoretical results regarding the generalization
  bounds for classification for our problem, which translate into
  error bounds for CI testing. We provide a novel analysis
  of Rademacher type classification bounds in the presence of
  non-i.i.d \textit{near-independent} samples. We empirically validate
  the performance of our algorithm on simulated and real datasets and
  show performance gains over previous methods.
  \blfootnote{* Equal Contribution}
\end{abstract}

\section{Introduction}
%Rough Story....
%
%(i) Three areas where CI tests are the main work-horse:
%
%(1) Markov Blanket Estimation with cites.
%
%(2) Bayesian Network Learning - add cites
%
%(3) Causal Structure learning - discuss both constraint based and score based contributions. 
%
%Prior work on parametric and non-parametric CI. Kernel based methods. Discuss run time and dimensionality hit. 
%
%(ii) Discuss that ours is a modular approach- simple bootstrap followed by powerful classifiers. Classifiers can be flexibly chosen with domain knowledge, and any advancement in this field implies an advancement in ours. Discuss dimension issues. 
%
%(iii) List main contributions:
%
%(1) Reduction to classification with NNB.
%
%(2) Guarantees for NNB
%
%(3) Novel guarantees for non-i.i.d classification
%
%(4) Empirical Performance. 

%%%%%%%%%%%%%%%%%%%%%%%%%Start Intro%%%%%%%%%%%%%%%%%%%%%%%%%%%%%%%%%%5
Testing datasets for Conditional Independence (CI) have significant
applications in several statistical/learning problems; among
others, examples include discovering/testing for edges in Bayesian
networks~\cite{koller2009probabilistic,
  spirtes2000causation,cheng1998learning,de2000new}, causal
inference~\cite{pearl2009causality,
  kalisch2007estimating,tsamardinos2006max, brenner2013sparsityboost}
and feature selection through Markov
Blankets~\cite{koller1996toward,xing2001feature}. Given a 
triplet of random variables/vectors $(X, Y, Z)$, we say that $X$ is
conditionally independent of $Y$ given $Z$ (denoted by
$X \independent Y \vert Z$), if the joint distribution
$f_{X,Y,Z}(x,y,z)$ factorizes as
$f_{X,Y,Z}(x,y,z) = f_{X|Z}(x|z)f_{Y|Z}(y|z)f_{Z}(z)$. The problem of
\textit{Conditional Independence Testing} (CI Testing) can be defined
as follows: Given $n$ i.i.d samples from $f_{X,Y,Z}(x,y,z)$,
distinguish between the two hypothesis $\mathcal{H}_0:$
$X \independent Y \vert Z$ and $\mathcal{H}_1:$
$X \centernot{\independent} Y \vert Z$.

% CI tests are fundamental tools for learning Bayesian
% networks~\cite{koller2009probabilistic}, causal
% inference~\cite{pearl2009causality} and feature selection through
% Markov Blankets~\cite{koller1996toward}. In causal structure learning,
% results of CI tests among the nodes of a causal
% graph~\cite{pearl2009causality} are vital inputs to popular constraint
% based algorithms~\cite{kalisch2007estimating,tsamardinos2006max}. A
% key property is that given two variables on a causal graph $X$ and $Y$
% and a vector of other variables $Z$ (vector of nodes in the causal
% graph), $X \independent Y \vert Z$, implies the absence of an edge
% between $X$ and $Y$. Some of the more recent score-based methods of
% causal inference also rely on effective CI
% tests~\cite{brenner2013sparsityboost}. Similarly in the field of
% learning Bayesian networks, results from CI tests are used to discover
% edges between random variables that form the nodes of the
% network~\cite{spirtes2000causation,cheng1998learning,de2000new}. CI
% tests are also useful in the field of feature selection for
% high-dimensional supervised learning problems through Markov
% blankets~\cite{koller1996toward, xing2001feature}. 

% It is important to note that in many of these settings, while $X,Y$
% might be low-dimensional, $Z$ is high-dimensional due to the size of
% the conditioning set that potentially separates $(X,Y)$ from the rest
% of the graph.

In this paper we propose a data-driven \textit{Model-Powered} CI
test. The central idea in a model-driven approach is to convert a
statistical testing or estimation problem into a pipeline that
utilizes the power of supervised learning models like classifiers and
regressors; such pipelines can then leverage recent advances in
classification/regression in high-dimensional settings. In this paper,
we take such a model-powered approach (illustrated in
Fig.~\ref{fig:illustrate}), which reduces the problem of CI testing to
Binary Classification. Specifically, the key steps of our procedure
are as follows:

\begin{figure}[h] \centering
  \includegraphics[width=8cm,height=4.5cm]{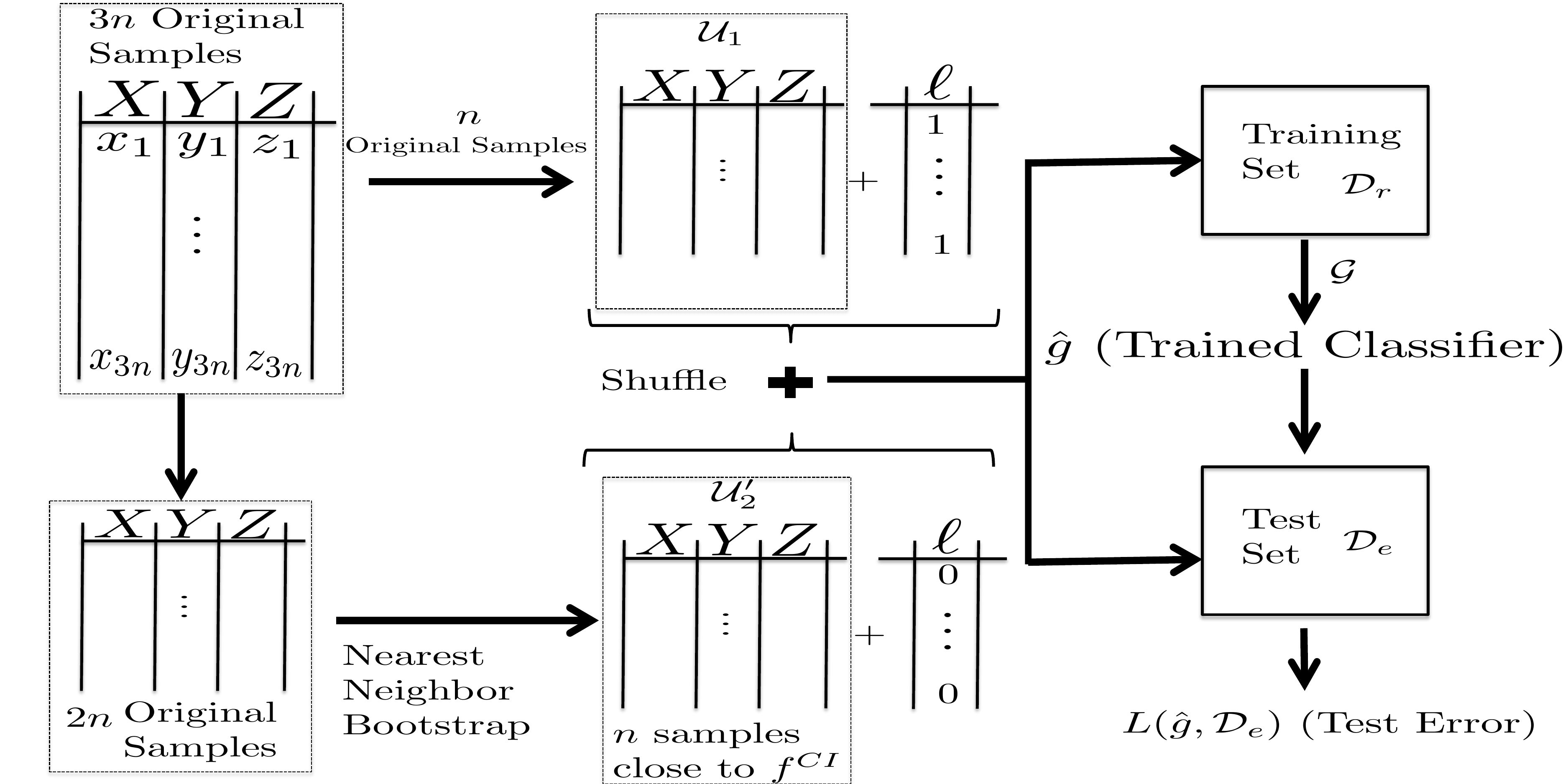}
  \caption{Illustration of our methodology. A part of the original
    samples are kept aside in $\mathcal{U}_1$. The rest of the samples
    are used in our nearest neighbor boot-strap to generate a data-set
    $\mathcal{U}_2'$ which is close to $f^{CI}$ in distribution. The
    samples are labeled as shown and a classifier is trained on a
    training set. The test error is measured on a test set
    there-after. If the test-error is close to $0.5$, then
    $\mathcal{H}_0$ is not rejected, however if the test error is low
    then $\mathcal{H}_0$ is rejected.}
	\label{fig:illustrate}
\end{figure}

%For vanilla independence testing, the key
%property was that the permuted tuples $(X_i,Y_{\pi(i)})$ are i.i.d.
%samples from the product distribution density $f(x)f(y)$. It is quite
%interesting that a permutation allows us create samples from
%$f(x)f(y)$ without needing to learn these potentially high-dimensional
%distributions. To perform conditional independence tests we have to
%solve the analogous problem: we need to create samples from the
%conditional product distribution $f^{CI}(x,y,z) = f(x|z)f(y|z)f(z)$,
%given access only to i.i.d. samples from the joint distribution
%$f(x,y,z)$. Our key technical contribution is to show how this can be
%done with a nearest-neighbor bootstrap procedure and establish that
%these are close in total variation distance to the true $f^{CI}$
%distribution.
 
\setword{$(i)$}{step1} Suppose we are provided $3n$ i.i.d samples from
$f_{X,Y,Z}(x,y,z)$. We keep aside $n$ of these original samples in a
set $\mathcal{U}_1$ (refer to Fig.~\ref{fig:illustrate}).  The
remaining $2n$ of the original samples are processed through our first
module, the \textit{nearest-neighbor bootstrap}
(Algorithm~\ref{alg:NNB} in our paper), which produces $n$ simulated
samples stored in $\mathcal{U}_2'$. In Section~\ref{sec:results}, we
show that these generated samples in $\mathcal{U}_2'$ are in fact
close in total variational distance (defined in
Section~\ref{sec:results}) to the conditionally independent
distribution
$f^{CI}(x,y,z) \triangleq f_{X|Z}(x|z)f_{Y|Z}(y|z)f_Z(z)$.  (Note that
only under $\mathcal{H}_0$ does the equality
$f^{CI}(.) = f_{X,Y,Z}(.)$ hold; our method generates samples close to
$f^{CI}(x,y,z)$ under {\em both} hypotheses).

\setword{$(ii)$}{step2} Subsequently, the original samples kept aside
in $\mathcal{U}_1$ are labeled $1$ while the new samples simulated
from the nearest-neighbor bootstrap (in $\mathcal{U}_2'$) are labeled
$0$. The labeled samples ($\mathcal{U}_1$ with label $1$ and
$\mathcal{U}_2'$ labeled $0$) are aggregated into a data-set
$\mathcal{D}$. This set $\mathcal{D}$ is then broken into training and test
sets $\mathcal{D}_r$ and $\mathcal{D}_e$ each containing $n$ samples
each.

\setword{$(iii)$}{step3} Given the labeled training data-set (from
step~\ref{step2}), we train powerful classifiers such as gradient boosted
trees~\cite{chen2016xgboost} or deep neural
networks~\cite{krizhevsky2012imagenet} which attempt to learn the
classes of the samples. If the trained classifier has good accuracy
over the test set, then intuitively it means that the joint
distribution $f_{X,Y,Z}(.)$ is distinguishable from $f^{CI}$ (note
that the generated samples labeled $0$ are close in distribution to
$f^{CI}$). Therefore, we reject $\mathcal{H}_0$. On the other hand, if
the classifier has accuracy close to random guessing, then
$f_{X,Y,Z}(.)$ is in fact close to $f^{CI}$, and we fail to reject
$\mathcal{H}_0$.

For independence testing (i.e whether $X \independent Y$), classifiers
were recently used in~\cite{lopez2016revisiting}. Their key
observation was that given i.i.d samples $(X,Y)$ from $f_{X,Y}(x,y)$, if the
$Y$ coordinates are randomly permuted then the resulting samples
exactly emulate the distribution $f_X(x)f_Y(y)$. Thus the problem can
be converted to a two sample test between a subset of the original
samples and the other subset which is permuted - Binary classifiers
were then harnessed for this two-sample testing; for details see
\cite{lopez2016revisiting}. However, in the case of CI testing we need
to emulate samples from $f^{CI}$. This is harder because the
permutation of the samples needs to be $Z$ dependent (which can be
high-dimensional). One of our key technical contributions is in
proving that our nearest-neighbor bootstrap in step~\ref{step1}
achieves this task.

The advantage of this modular approach is that we can harness the
power of classifiers (in step~\ref{step3} above), which have good
accuracies in high-dimensions. Thus, any improvements in the field of
binary classification imply an advancement in our CI test. Moreover,
there is added flexibility in choosing the best classifier based on
domain knowledge about the data-generation process. Finally, our
bootstrap is also efficient owing to fast algorithms for identifying
nearest-neighbors~\cite{ramasubramanian1992fast}.

\subsection{Main Contributions}
\label{contri}
$(i)$ {\bf (Classification based CI testing)} We reduce the problem of
CI testing to Binary Classification as detailed in
steps~\ref{step1}-\ref{step3} above and in
Fig.~\ref{fig:illustrate}. We simulate samples that are close to
$f^{CI}$ through a novel nearest-neighbor bootstrap
(Algorithm~\ref{alg:NNB}) given access to i.i.d samples from the joint
distribution. The problem of CI testing then reduces to a two-sample
test between the original samples in $\mathcal{U}_1$ and
$\mathcal{U}_2'$, which can be effectively done by binary classifiers.

$(ii)$ {\bf (Guarantees on Bootstrapped Samples)} As mentioned in
steps~\ref{step1}-\ref{step3}, if the samples generated by the
bootstrap (in $\mathcal{U}_2'$) are close to $f^{CI}$, then the CI
testing problem reduces to testing whether the data-sets
$\mathcal{U}_1$ and $\mathcal{U}_2'$ are distinguishable from each
other.  We theoretically justify that this is indeed true. Let
$\phi_{X,Y,Z}(x,y,z)$ denote the distribution of a sample produced by
Algorithm~\ref{alg:NNB}, when it is supplied with $2n$ i.i.d samples
from $f_{X,Y,Z}(.)$. In Theorem~\ref{thm:dtv}, we prove that
$d_{TV}(\phi, f^{CI}) = O(1/n^{1/d_z})$ under appropriate smoothness
assumptions. Here $d_z$ is the dimension of $Z$ and $d_{TV}$ denotes
total variational distance (Def.~\ref{dtv}).

%One of our main theoretical results is that the marginal distribution of a sample generated by the nearest-neighbor bootstrap is close in variational distance to the conditionally independent distribution $f^{CI}$. Let $\phi_{X,Y,Z}(x,y,z)$ denote the distribution of a sample produced by Algorithm~\ref{alg:NNB}, when it is supplied with $n$ i.i.d  samples from $f(x,y,z)$. In Theorem~\ref{thm:dtv}, we prove that $d_{TV}(\phi, f^{CI}) = O(1/n^{1/d_z})$ under appropriate smoothness assumptions. Here $d_z$ is the dimension of $Z$ and $d_{TV}$ denotes total variational distance (Def.~\ref{dtv}).

$(iii)$ {\bf (Generalization Bounds for Classification under
  \textit{near-independence})} The samples generated from the
nearest-neighbor bootstrap do not remain i.i.d but they are
\textit{close} to i.i.d. We quantify this property and go on to show
generalization risk bounds for the classifier. Let us denote the class
of function encoded by the classifier as $\mathcal{G}$. Let $\hat{R}$
denote the probability of error of the optimal classifier
$\hat{g} \in \mathcal{G}$ trained on the training set
(Fig.~\ref{fig:illustrate}). We prove that under appropriate
assumptions, we have
\begin{align*} &r_{0} - \mathcal{O}(1/n^{1/d_z}) \leq \hat{R} \leq r_{0} +
  \mathcal{O}(1/n^{1/d_z}) + \mathcal{O}\left(\sqrt{V} \left( n^{-1/3}
      + \sqrt{2^{d_z}/n} \right)\right) \end{align*} with high
probability, upto log factors. Here $r_0 = 0.5(1 - d_{TV}(f,f^{CI}))$,
$V$ is the VC dimension~\cite{vapnik2015uniform} of the class
$\mathcal{G}$.  Thus when $f$ is equivalent to $f^{CI}$
($\mathcal{H}_0$ holds) then the error rate of the classifier is close
to $0.5$. But when $\mathcal{H}_1$ holds the loss is much lower.  We
provide a novel analysis of Rademacher complexity
bounds~\cite{boucheron2005theory} under near-independence which is of
independent interest.

$(iv)$ {\bf (Empirical Evaluation)} We perform extensive numerical
experiments where our algorithm outperforms the state of the
art~\cite{zhang2012kernel,strobl2017approximate}. We also apply our
algorithm for analyzing CI relations in the protein signaling network
data from the flow cytometry data-set~\cite{sachs2005causal}. In
practice we observe that the performance with respect to dimension of
$Z$ scales much better than expected from our worst case theoretical
analysis. This is because powerful binary classifiers perform well in
high-dimensions.

\subsection{Related Work}
In this paper we address the problem of non-parametric CI testing when
the underlying random variables are continuous. The literature on
non-parametric CI testing is vast. We will review some of the recent
work in this field that is most relevant to our paper.  

Most of the recent work in CI testing are kernel
based~\cite{strobl2017approximate,zhang2012kernel,doran2014permutation}.
Many of these works build on the study
in~\cite{fukumizu2004dimensionality}, where non-parametric CI
relations are characterized using covariance operators for Reproducing
Kernel Hilbert Spaces (RKHS)
\cite{fukumizu2004dimensionality}. KCIT~\cite{zhang2012kernel} uses
the partial association of regression functions relating $X$, $Y$ ,
and $Z$.  RCIT~\cite{strobl2017approximate} is an approximate version
of KCIT that attempts to improve running times when the number of
samples are large. KCIPT~\cite{doran2014permutation} is perhaps most
relevant to our work. In~\cite{doran2014permutation}, a specific
permutation of the samples is used to simulate data from $f^{CI}$. An
expensive linear program needs to be solved in order to calculate the
permutation. On the other hand, we use a simple nearest-neighbor
bootstrap and further we provide theoretical guarantees about the
closeness of the samples to $f^{CI}$ in terms of total variational
distance. Finally the two-sample test in~\cite{doran2014permutation}
is based on a kernel method~\cite{borgwardt2006integrating}, while we
use binary classifiers for the same purpose. There has also been
recent work on entropy estimation~\cite{gao2016demystifying} using
nearest neighbor techniques (used for density estimation); this can
subsequently be used for CI testing by estimating the conditional
mutual information $\mathrm{I}(X;Y \vert Z)$.
 
Binary classification has been recently used for two-sample testing,
in particular for independence testing~\cite{lopez2016revisiting}. Our
analysis of generalization guarantees of classification are aimed at
recovering guarantees similar to~\cite{boucheron2005theory}, but in a
non-i.i.d setting. In this regard (non-i.i.d generalization
guarantees), there has been recent work in proving Rademacher
complexity bounds for $\beta$-mixing stationary
processes~\cite{mohri2009rademacher}. This work also falls in the category of machine learning reductions, where the general philosophy is to reduce 
various machine learning settings like multi-class regression~\cite{beygelzimer2009conditional}, ranking~\cite{balcan2007robust}, reinforcement learning~\cite{langford2003reducing}, structured prediction~\cite{daume2009search} to that of binary classification.

%\section{Problem Setting}
%\input{PSetting}

\section{Problem Setting and Algorithms}
\label{sec:Algorithm}
In this section we describe the algorithmic details of our CI testing procedure. We first formally define our problem. Then we describe our bootstrap algorithm for generating the data-set that mimics samples from $f^{CI}$. We give a detailed pseudo-code for our CI testing process which reduces the problem to that of binary classification. Finally, we suggest further improvements to our algorithm. 

{\bf Problem Setting: }The problem setting is that of non-parametric \textit{Conditional Independence (CI)} testing given i.i.d samples from the joint distributions of random variables/vectors~\cite{zhang2012kernel,doran2014permutation,strobl2017approximate}. We are given $3n$ i.i.d samples from a continuous joint distribution $f_{X,Y,Z}(x,y,z)$ where $x \in \mathbb{R}^{d_x}, y \in \mathbb{R}^{d_y}$ and $z \in \mathbb{R}^{d_z}$. The goal is to test whether $X \independent Y \vert Z$ i.e whether $f_{X,Y,Z}(x,y,z)$ factorizes as,
$f_{X,Y,Z}(x,y,z) = f_{X \vert Z}(x \vert z)f_{Y \vert Z}(y \vert z)f_{Z}(z) \label{eq:CI} \triangleq f^{CI}(x,y,z)$

This is essentially a hypothesis testing problem where: $\mathcal{H}_0 \text{ : } X \independent Y \vert Z$ and 
$\mathcal{H}_1 \text{ : } X \centernot{\independent} Y \vert Z$.

{\bf Note: } For notational convenience, we will drop the subscripts when the context is evident. For instance we may use $f(x \vert z)$ in place of $f_{X \vert Z}(x \vert z)$. 

{\bf Nearest-Neighbor Bootstrap: }
Algorithm~\ref{alg:NNB} is a procedure to generate a data-set $\mathcal{U'}$ consisting of $n$ samples given a data-set $\mathcal{U}$ of $2n$ i.i.d samples from the distribution $f_{X,Y,Z}(x,y,z)$. The data-set $\mathcal{U}$ is broken into two equally sized partitions $\mathcal{U}_1$ and $\mathcal{U}_2$. Then for each sample in $\mathcal{U}_1$, we find the nearest neighbor in $\mathcal{U}_2$ in terms of the $Z$ coordinates. The $Y$-coordinates of the sample from $\mathcal{U}_1$ are exchanged with the $Y$-coordinates of its nearest neighbor (in $\mathcal{U}_2$); the modified sample is added to $\mathcal{U'}$. 
\begin{algorithm}
	\caption{DataGen - Given data-set $\mathcal{U} = \mathcal{U}_1 \cup \mathcal{U}_2$ of $2n$ i.i.d samples from $f(x,y,z)$ ($\vert \mathcal{U}_1 \vert = \vert \mathcal{U}_2 \vert = n$ ), returns a new data-set $\mathcal{U}'$ having $n$ samples.}
	\begin{algorithmic}[1]
		\Function{DataGen}{$\mathcal{U}_1,\mathcal{U}_2,2n$} 
		\State $\mathcal{U}' = \emptyset$
		\For {$u$ in $\mathcal{U}_1$}  	
		\State \parbox[t]{\dimexpr\linewidth-\algorithmicindent}{Let $v = (x',y',z') \in \mathcal{U}_2$ be the sample such that $z'$ is the $1$-Nearest Neighbor (1-NN) \\ of $z$ (in $\ell_2$ norm) in the whole data-set ${\cal U}_2$, where $u = (x,y,z)$}
		\State Let $u' = (x,y',z)$ and $\mathcal{U}' = \mathcal{U}'\cup \{u' \}.$
		\EndFor
		\EndFunction
	\end{algorithmic}
	\label{alg:NNB}
\end{algorithm}

One of our main results is that the samples in $\mathcal{U}'$, generated in Algorithm~\ref{alg:NNB} mimic samples coming from the distribution $f^{CI}$. Suppose $u = (x,y,z) \in \mathcal{U}_1$ be a sample such that $f_Z(z)$ is not too small. In this case $z'$ (the 1-NN sample from $\mathcal{U}_2$) will not be far from $z$. Therefore given a fixed $z$, under appropriate smoothness assumptions, $y'$ will be close to an independent sample coming from $f_{Y|Z}(y|z') \sim f_{Y|Z}(y|z)$. On the other hand if $f_{Z}(z)$ is small, then $z$ is a rare occurrence and will not contribute adversely. 

{\bf CI Testing Algorithm: }
Now we introduce our CI testing algorithm, which uses Algorithm~\ref{alg:NNB} along with binary classifiers. The psuedo-code is in Algorithm~\ref{alg:ccitv1} (Classifier CI Test -CCIT).  
\begin{algorithm}
	\caption{CCITv1 - Given data-set $\mathcal{U}$ of $3n$ i.i.d samples from $f(x,y,z)$, returns if $X \independent Y \vert Z$.}
	\begin{algorithmic}[1]
		\Function{CCIT}{$\mathcal{U},3n,\tau, \mathcal{G}$}
		\State Partition $\mathcal{U}$ into three disjoint partitions $\mathcal{U}_1$, $\mathcal{U}_2$ and $\mathcal{U}_3$ of size $n$ each, randomly.
		\State Let $\mathcal{U}_2'$ = DataGen($\mathcal{U}_2 , \mathcal{U}_3,2n$) (Algorithm~\ref{alg:NNB}). Note that $\lvert \mathcal{U}_2' \rvert = n$. 
		\State Create Labeled data-set $\mathcal{D} := \{ (u,\ell = 1)\}_{u \in \mathcal{U}_1} \cup \{ (u',\ell' = 0)\}_{u' \in \mathcal{U}_2'}$  
		\State Divide data-set $\mathcal{D}$ into train and test set $\mathcal{D}_{r}$ and $\mathcal{D}_{e}$ respectively. Note that $\vert\mathcal{D}_{r} \vert = \vert\mathcal{D}_{e} \vert = n$. 
		\State \parbox[t]{\dimexpr\linewidth-\algorithmicindent}{Let $\hat{g} = \argmin_{g \in \mathcal{G}} \hat{L}(g,\mathcal{D}_{r}) := \frac{1}{\lvert \mathcal{D}_r\rvert}\sum _{(u,\ell) \in \mathcal{D}_r}\mathds{1} \{g(u) \neq l \}$. This is Empirical Risk Minimization for training the classifier (finding the best function in the class $\mathcal{G}$).} 
		\State If $\hat{L}(\hat{g},\mathcal{D}_{e}) > 0.5 - \tau$, then conclude $X \independent Y \vert Z$, otherwise, conclude $X \centernot{\independent} Y \vert Z$.
		\EndFunction
	\end{algorithmic}
	\label{alg:ccitv1}
\end{algorithm}

In Algorithm~\ref{alg:ccitv1}, the original samples in $\mathcal{U}_1$ and the nearest-neighbor bootstrapped samples in $\mathcal{U}_2'$ should be almost indistinguishable if $\mathcal{H}_0$ holds. However, if $\mathcal{H}_1$ holds, then the classifier trained in Line 6 should be able to easily distinguish between the samples corresponding to different labels. In Line 6, $\mathcal{G}$ denotes the space of functions over which risk minimization is performed in the classifier.

We will show (in Theorem~\ref{thm:dtv}) that the variational distance between the distribution of one of the  samples in $\mathcal{U}_2'$ and $f^{CI}(x,y,z)$ is very small for large $n$. However, the samples in $\mathcal{U}_2'$ are not exactly i.i.d but \textit{close} to i.i.d. Therefore, in practice for finite $n$, there is a small bias $b>0$ i.e. $\hat{L}(\hat{g},\mathcal{D}_{e}) \sim 0.5-b$, even when $\mathcal{H}_0$ holds.  The threshold $\tau$ needs to be greater than $b$ in order for Algorithm~\ref{alg:ccitv1} to function. In the next section, we present an algorithm where this bias is corrected.

{\bf Algorithm with Bias Correction: }
We present an improved bias-corrected version of our algorithm as Algorithm~\ref{alg:ccitv2}. As mentioned in the previous section, in Algorithm~\ref{alg:ccitv1}, the optimal classifier may be able to achieve a loss slightly less that 0.5 in the case of finite $n$, even when $\mathcal{H}_{0}$ is true. However, the classifier is expected to distinguish between the two data-sets only based on the $Y, Z$ coordinates, as the joint distribution of  $X$ and $Z$ remains the same in the nearest-neighbor bootstrap. The key idea in Algorithm~\ref{alg:ccitv2} is to train a classifier only using the $Y$ and $Z$ coordinates, denoted by $\hat{g}'$. As before we also train another classier using all the coordinates, which is denoted by $\hat{g}$. The test loss of $\hat{g}'$ is expected to be roughly $0.5 - b$, where $b$ is the bias mentioned in the previous section. Therefore, we can just subtract this bias. Thus, when $\mathcal{H}_0$ is true $\hat{L}(\hat{g}',\mathcal{D}_{e}') - \hat{L}(\hat{g},\mathcal{D}_{e})$ will be close to $0$. However, when $\mathcal{H}_1$ holds, then $\hat{L}(\hat{g},\mathcal{D}_{e})$ will be much lower, as the classifier $\hat{g}$ has been trained leveraging the information encoded in all the coordinates. 

\begin{algorithm}
	\caption{CCITv2 - Given data-set $\mathcal{U}$ of $3n$ i.i.d samples, returns whether $X \independent Y \vert Z$.}
	\begin{algorithmic}[1]
		\Function{CCIT}{$\mathcal{U},3n,\tau, \mathcal{G}$}
		\State  Perform Steps 1-5 as in Algorithm~\ref{alg:ccitv1}. 
		\State 	\parbox[t]{\dimexpr\linewidth-\algorithmicindent}{Let $\mathcal{D}_{r}' = \{ ((y,z),\ell)\}_{(u = (x,y,z),\ell)\in \mathcal{D}_{r}}$. Similarly, let $\mathcal{D}_{e}' = \{ ((y,z),\ell)\}_{(u = (x,y,z),\ell)\in \mathcal{D}_{e}}$. These are the training and test sets without the $X$-coordinates.}
		\State  Let $\hat{g} = \argmin_{g \in \mathcal{G}} \hat{L}(g,\mathcal{D}_{r}) := \frac{1}{\lvert \mathcal{D}_r\rvert}\sum _{(u,\ell) \in \mathcal{D}_r}\mathds{1} \{g(u) \neq l \}$. Compute test loss: $\hat{L}(\hat{g},\mathcal{D}_{e}).$ 
		\State Let $\hat{g}' = \argmin_{g \in \mathcal{G}} \hat{L}(g,\mathcal{D}'_{r}) := \frac{1}{\lvert \mathcal{D}_{r}'\rvert}\sum _{(u,\ell) \in \mathcal{D}_r'}\mathds{1} \{g(u) \neq l \}$. Compute test loss: $\hat{L}(\hat{g}',\mathcal{D}'_{e}).$
		\State If $\hat{L}(\hat{g},\mathcal{D}_{e}) < \hat{L}(\hat{g}',\mathcal{D}'_{e}) - \tau$, then conclude $X \centernot{\independent} Y \vert Z$, otherwise, conclude $X \independent Y \vert Z$.
		\EndFunction
	\end{algorithmic}
	\label{alg:ccitv2}
\end{algorithm}

\section{Theoretical Results}
\label{sec:results}
In this section, we provide our main theoretical results. We first show that the distribution of any one of the samples generated in Algorithm~\ref{alg:NNB} closely resemble that of a sample coming from $f^{CI}$. This result holds for a broad class of distributions $f_{X,Y,Z}(x,y,z)$ which satisfy some smoothness assumptions. However, the samples generated by Algorithm~\ref{alg:NNB} ($\mathcal{U}_2$ in the algorithm) are not exactly i.i.d but \textit{close} to i.i.d. We quantify this and go on to show that empirical risk minimization over a class of classifier functions generalizes well using these samples. %We provide a novel analysis which provides Rademacher complexity type bounds~\cite{boucheron2005theory} for these non i.i.d samples. %Finally, we theoretically justify our bias correction in Algorithm~\ref{alg:ccitv2}. 
Before, we formally state our results we provide some useful definitions. 

\begin{definition} \label{dtv} The \textbf{total variational distance} between two continuous probability distributions $f(.)$ and $g(.)$ defined over a domain $\mathcal{X}$ is, $d_{TV}(f,g) = \sup_{p \in \mathcal{B}} \lvert\EE_{f}[p(X)] - \EE_{g}[p(X)] \rvert$ where $\mathcal{B}$ is the set of all measurable functions from $\mathcal{X} \rightarrow [0,1]$. Here, $\EE_{f}[.]$ denotes expectation under distribution $f$. 

\end{definition}

%\begin{definition} (Kullback–Leibler divergence) The KL-divergence between two continuous probability distributions $f(.)$ and $g(.)$ defined over a domain $\mathcal{X}$ is given by,
%	\begin{equation}
%	\mathrm{KL}(f \Vert g) = \int_{x \in \mathcal{X}} f(x) \log \left( \frac{f(x)}{g(x)}\right) dx
%	\end{equation}
%\end{definition}

%\begin{definition}
%	(Fisher Information Matrix) Let $\mathbf{I}(z)$ be the Fisher information matrix of the parameterized distributions $f(y|z)$. The fisher information defines the curvature of the relative entropy, such that:
%	\begin{equation}
%	 \mathbf{I}(z)_{ij} = \EE \left[ -\frac{\delta^2 \log f(y|z)}{\delta z_i \delta z_j} \Bigg \vert Z = z\right] = \left(\frac{\partial^2}{\partial z'_i \partial z'_j} \int \log \frac{f(y|z)}{f(y|z')}  f(y|z) dy \right) \Bigg \vert _{z' = z}
%	\end{equation}
%\end{definition}

 We first prove that the distribution of any one of the samples generated in Algorithm~\ref{alg:NNB} is close to $f^{CI}$ in terms of total variational distance. We make the following assumptions on the joint distribution of the original samples i.e. $f_{X,Y,Z}(x,y,z)$:

{\bf Smoothness assumption on $f(y|z)$: } We assume a smoothness condition on $f(y|z)$, that is a generalization of boundedness of the max. eigenvalue of Fisher Information matrix of $y$ w.r.t $z$.
\begin{assumption}
	\label{assump1}
	For $z \in \mathbb{R}^{d_z}$, $a$ such that $\lVert a - z \rVert_2 \leq \epsilon_1$, the generalized curvature matrix $\mathbf{I}_{a}(z)$ is,
	\begin{equation}
	\mathbf{I}_{a}(z)_{ij} = \left(\frac{\partial^2}{\partial z'_i \partial z'_j} \int \log \frac{f(y|z)}{f(y|z')}  f(y|z) dy \right) \Bigg \vert _{z' = a} = \EE \left[ -\frac{\delta^2 \log f(y|z')}{\delta z'_i \delta z'_j} \Big \vert _{z' = a} \Bigg \vert Z = z\right]
	\end{equation}
	We require that for all $z \in \mathbb{R}^{d_z}$ and all $a$ such that $\lVert a - z \rVert_2 \leq \epsilon_1$, $\lambda_{max} \left( \mathbf{I}_{a}(z) \right) \leq \beta$. Analogous assumptions have been made on the Hessian of the density in the context of entropy estimation~\cite{gao2016breaking}. 
\end{assumption}

{\bf Smoothness assumptions on $f(z)$: }
We assume some smoothness properties of the probability density function $f(z)$. The smoothness assumptions (in Assumption~\ref{assumption1}) is a subset of the assumptions made in \cite{gao2016demystifying} (Assumption 1, Page 5) for entropy estimation.

\begin{definition}
	\label{def:G}
	For any $\delta >0$, we define  $G(\delta) = \PP \left( f(Z) \leq \delta \right)$. This is the probability mass of the distribution of $Z$ in the areas where the p.d.f is less than $\delta$. 
\end{definition}

\begin{definition}
(Hessian Matrix) Let $H_f(z)$ denote the Hessian Matrix of the p.d.f $f(z)$ with respect to $z$ i.e $H_f(z)_{ij} = \partial^2f(z)/\partial z_i \partial z_j$, provided it is twice continuously differentiable at $z$. 
\end{definition}

\begin{assumption}\label{assumption1}
	The probability density function $f(z)$ satisfies the following:
	
	(1) $f(z)$ is twice continuously differentiable and the Hessian matrix $H_f$ satisfies $\lVert H_f(z) \rVert_2 \leq c_{d_z}$ almost everywhere, where $c_{d_z}$ is only dependent on the dimension. 
	
	(2) $\int f(z)^{1-1/d} dz \leq c_3 ,~ \forall d \geq 2 $ where $c_3$ is a constant. 
\end{assumption}

\begin{theorem}
	\label{thm:dtv}
	Let $(X,Y',Z)$ denote a sample in $\mathcal{U}_2'$ produced by Algorithm~\ref{alg:NNB} by modifying the original sample $(X,Y,Z)$ in $\mathcal{U}_1$, when supplied with $2n$ i.i.d samples from the original joint distribution $f_{X,Y,Z}(x,y,z)$. Let $\phi_{X,Y,Z}(x,y,z)$ be the distribution of $(X,Y',Z)$. Under smoothness assumptions (\ref{assump1}) and (\ref{assumption1}), for any $\epsilon < \epsilon_1$, $n$ large enough, we have:
	\begin{align*}
	&d_{TV}(\phi,f^{CI}) \leq  b(n) \\
	&\triangleq  \frac{1}{2}\sqrt{\frac{\beta}{4} \frac{c_3 * 2^{1/d_z} \Gamma(1/d_z) }{(n \gamma_{d_z})^{1/d_z} d_z}  + \frac{\beta \epsilon G  \left(2c_{d_z} \epsilon^2 \right) }{4} }+  \exp \left( -\frac{1}{2}n \gamma_{d_z} c_{d_z} \epsilon^{d_z+2} \right) + G \left( 2 c_{d_z} \epsilon^2 \right).
	\end{align*}
	Here, $\gamma_d$ is the volume of the unit radius $\ell_2$ ball in $\mathbb{R}^d$.
\end{theorem}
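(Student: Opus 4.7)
The central observation is that $d_{TV}(\phi, f^{CI})$ reduces to an expected \emph{one-dimensional} TV distance between $f_{Y|Z}(\cdot\mid z)$ and $f_{Y|Z}(\cdot\mid z')$, where $z'$ is the NN of $z$ in $\mathcal{U}_2$. Since $X$ is copied unchanged from $\mathcal{U}_1$ while $Y'$ depends only on $Z$ and the independent sample set $\mathcal{U}_2$, the joint $\phi$ factorizes as
\[
\phi(x,y,z) \;=\; f_{X|Z}(x\mid z)\,q(y\mid z)\,f_Z(z), \qquad q(y\mid z) \;:=\; \mathbb{E}_{Z'\mid Z=z}\!\bigl[f_{Y|Z}(y\mid Z')\bigr].
\]
Integrating out $X$ and using convexity of $d_{TV}$ under mixtures yields $d_{TV}(\phi,f^{CI}) \leq \mathbb{E}_{(Z,Z')}\!\bigl[d_{TV}(f_{Y|Z}(\cdot\mid Z'),\,f_{Y|Z}(\cdot\mid Z))\bigr]$. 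All subsequent work is to control this expectation.

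Pointwise control comes from combining Pinsker's inequality with the curvature assumption. Taylor-expanding $\int\log[f(y\mid z)/f(y\mid z')]\,f(y\mid z)\,dy$ to second order around $z'=z$, the zeroth- and first-order terms vanish, so Assumption~\ref{assump1} gives $\mathrm{KL}(f(\cdot\mid Z)\|f(\cdot\mid Z')) \leq \tfrac\beta2\|Z'-Z\|^2$ whenever $\|Z'-Z\|\leq\epsilon_1$, and hence $d_{TV}(f(\cdot\mid Z'),f(\cdot\mid Z)) \leq \tfrac{\sqrt\beta}{2}\|Z'-Z\|$. Setting $\delta=2c_{d_z}\epsilon^2$, $A=\{f(Z)>\delta\}$, $B=\{\|Z'-Z\|\leq\epsilon\}$, the plan is to split the expectation as $\mathbb{E}[d_{TV}\mathbf{1}_{A\cap B}] + P(A\cap B^c) + P(A^c)$. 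The last two pieces are immediate: $P(A^c) = G(\delta)$ by definition, while Assumption~\ref{assumption1}(1) combined with Taylor (the gradient term vanishes by symmetry on a centered ball) gives $P(\mathrm{Ball}(z,\epsilon)) \geq f(z)\gamma_{d_z}\epsilon^{d_z} - \tfrac{1}{2}c_{d_z}\gamma_{d_z}\epsilon^{d_z+2}$, which on $A$ is at least $\tfrac{1}{2}c_{d_z}\gamma_{d_z}\epsilon^{d_z+2}$; since $B^c$ requires all $n$ points of $\mathcal U_2$ to miss this ball, $P(A\cap B^c) \leq \exp\!\bigl(-\tfrac12 n\gamma_{d_z}c_{d_z}\epsilon^{d_z+2}\bigr)$.

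For the main term, Jensen gives $(\mathbb{E}[d_{TV}\mathbf{1}_{A\cap B}])^2 \leq \mathbb{E}[d_{TV}^2\mathbf{1}_{A\cap B}] \leq \tfrac\beta4 \mathbb{E}[\|Z'-Z\|^2\mathbf{1}_{A\cap B}]$, and using $\|Z'-Z\|\leq\epsilon$ on $B$ one has $\|Z'-Z\|^2\leq\epsilon\|Z'-Z\|$, reducing the problem to a first NN moment. Split once more by $A$: the $A^c\cap B$ contribution is $\leq \epsilon G(\delta)$, while the $A$-contribution is the classical NN first-moment integral
\[
\mathbb E\!\bigl[\|Z'-Z\|\,\mathbf{1}_A\bigr] \;\leq\; \int_A f(z)\!\int_0^\infty\!\exp\!\bigl(-nP(\mathrm{Ball}(z,r))\bigr)\,dr\,dz,
\]
into which I substitute the Hessian-based lower bound $P(\mathrm{Ball}(z,r))\geq \tfrac12 f(z)\gamma_{d_z}r^{d_z}$ (valid on $A$ whenever $r\lesssim\sqrt{f(z)/c_{d_z}}$), perform the change of variable $u = \tfrac12 nf(z)\gamma_{d_z}r^{d_z}$ to generate the factor $\Gamma(1/d_z)/d_z$, and apply Assumption~\ref{assumption1}(2) to bound $\int f(z)^{1-1/d_z}\,dz \leq c_3$. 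The factor $2^{1/d_z}$ in the theorem is exactly the cost of the $\tfrac12$ in the ball lower bound. Assembling these pieces into the square root and adding the two trivially-bounded tail terms reproduces $b(n)$.

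The main obstacle is the NN-moment integral: I must verify that the Hessian-based lower bound on $P(\mathrm{Ball}(z,r))$ is only invoked in the range of $r$ where it is valid (beyond $r\sim\sqrt{f(z)/c_{d_z}}$ the exponential is already negligible, so the tail contributes a lower-order term), and combine the $A$ and $A^c$ first-moment contributions so that the quantity inside the square root takes the precise additive form $\tfrac\beta4[\cdot] + \tfrac{\beta\epsilon G(\delta)}{4}$ claimed in the statement. Everything else---the joint-density factorization, convexity of $d_{TV}$, Pinsker, the second-order Taylor expansion of KL, and standard NN concentration---is routine once the decomposition is in place.
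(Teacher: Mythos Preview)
Your proposal is correct and follows essentially the same route as the paper: the factorization of $\phi$, the reduction to $\mathbb{E}_{Z,Z'}[d_{TV}(f(\cdot\mid Z),f(\cdot\mid Z'))]$, the Pinsker--Taylor step yielding a $\sqrt{\beta}\,\|Z-Z'\|$ bound via Assumption~\ref{assump1}, the Hessian-based lower bound on $P(\mathrm{Ball}(z,r))$, and the Gamma-integral computation of the first NN moment using Assumption~\ref{assumption1}(2) are exactly the paper's ingredients, in the same order.

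Two small bookkeeping remarks. First, your three-way split already restricts the main term to $A\cap B$, so the later ``split once more by $A$'' is redundant; the paper instead restricts only to $B$ at the first stage, then splits the first moment $\mathbb{E}[\|Z'-Z\|\mathbf{1}_B]$ into $A$ and $A^c$ --- this is what produces the $\epsilon G(2c_{d_z}\epsilon^2)$ term inside the square root in $b(n)$. Second, the paper does not pass through Jensen and the second moment: after Pinsker it has the pointwise bound $\tfrac{\sqrt{\beta}}{2}\|z-z'\|$, takes the expectation directly, and (implicitly using that the first NN moment is $\leq 1$ for $n$ large) moves it inside a square root to obtain $\sqrt{\tfrac{\beta}{4}\mathbb{E}[\|z'-z\|\mathbf{1}_B]}$. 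Your Jensen route followed by $\|Z'-Z\|^2\leq\epsilon\|Z'-Z\|$ is a cleaner justification and yields a bound with an extra factor of $\epsilon$ on the Gamma term, which is at least as strong when $\epsilon\leq 1$; either way the stated $b(n)$ follows.
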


Theorem~\ref{thm:dtv} characterizes the variational distance of the distribution of a sample generated in Algorithm~\ref{alg:NNB} with that of the conditionally independent distribution $f^{CI}$. We defer the proof of Theorem~\ref{thm:dtv} to Appendix~\ref{proof:boot}. Now, our goal is to characterize the misclassification error of the trained classifier in Algorithm~\ref{alg:ccitv1} under both $\mathcal{H}_0$ and $\mathcal{H}_1$. Consider the distribution of the samples in the data-set $\mathcal{D}_r$ used for classification in Algorithm~\ref{alg:ccitv1}. Let $q(x,y,z \vert \ell = 1)$ be the marginal distribution of each sample with label $1$. Similarly, let $q(x,y,z \vert \ell = 0)$ denote the marginal distribution of the label $0$ samples. Note that under our construction, 
\begin{align*}
q(x,y,z \vert \ell = 1) &= f_{X,Y,Z}(x,y,z) = \left\{
\begin{array}{ll}
f^{CI}(x,y,z)  & \mbox{if } \mathcal{H}_0 \mbox{ holds }\\
\neq  f^{CI}(x,y,z) & \mbox{if }  \mathcal{H}_1 \mbox{ holds }
\end{array}
\right. \\
q(x,y,z \vert \ell = 0) &= \phi_{X,Y,Z}(x,y,z)   \numberthis \label{eq:classification}
\end{align*} 
where $\phi_{X,Y,Z}(x,y,z)$ is as defined in Theorem~\ref{thm:dtv}. 

Note that even though the marginal of each sample with label $0$ is $\phi_{X,Y,Z}(x,y,z)$ (Equation~\eqref{eq:classification}), they are not exactly i.i.d owing to the nearest neighbor bootstrap. We will go on to show that they are actually \textit{close} to i.i.d and therefore classification risk minimization generalizes similar to the i.i.d results for classification~\cite{boucheron2005theory}. First, we review standard definitions and results from classification theory~\cite{boucheron2005theory}. 

{\bf Ideal Classification Setting:} We consider an \textit{ideal} classification scenario for CI testing and in the process define standard quantities in learning theory. Recall that $\cG$ is the set of classifiers under consideration. Let $\tilde{q}$
be our \emph{ideal} distribution for $q$ given by $\tilde{q}(x,y,z|\ell=1) = f_{X,Y,Z}(x,y,z)$, $\tilde{q}(x,y,z|\ell=0) = f^{CI}_{X,Y,Z}(x,y,z)$ and $\tilde{q}(\ell=1) = \tilde{q}(\ell=0) = 0.5$. In other words this is the ideal classification scenario for testing CI. 
Let $L(g(\xyz),\ell)$ be our \textbf{loss function} for a classifying function $g \in \mathcal{G}$, for a sample $u \triangleq (x,y,z)$ with true label $\ell$. In our algorithms the loss function is the $0-1$ loss, but our results hold for any bounded loss function s.t. $|L(g(u),\ell)| \leq |L|$. For a distribution $\tilde{q}$ and a classifier $g$ let $R_{\tilde{q}}(g) \triangleq \EE_{u,\ell \sim \tilde{q}} [L(g(\xyz),\ell)]$ be the \textbf{expected risk} of the function $g$. 
The \textbf{risk optimal classifier} $g^*_{\tilde{q}}$ under $\tilde{q}$ is given by $g^*_{\tilde{q}} \triangleq \arg \min_{g \in \cG} R_{\tilde{q}}(g)$. Similarly for a set of samples $S$ and a classifier $g$, let $R_S(g) \triangleq \frac{1}{|S|}\sum_{u,\ell \in S} L(g(\xyz),\ell)$ be the \textbf{empirical risk} on the set of samples. 
We define $g_S$ as the classifier that \textbf{minimizes the empirical loss} on the observed set of samples $S$ that is,  $g_S \triangleq \arg \min_{g \in \cG} R_{S}(g)$.

If the samples in $S$ are generated independently from $\tilde{q}$, then
standard results from the learning theory states that with probability
$\geq 1-\delta$,
\[
R_{\tilde{q}}(g_S) \leq R_{\tilde{q}}(g^*_{\tilde{q}}) + C
\sqrt{\frac{V}{n}} + \sqrt{\frac{2 \log (1/\delta)}{n}}, \numberthis \label{eq:stdcc}
\]
where $V$ is the VC dimension~\cite{vapnik2015uniform} of the classification model, $C$ is
an universal constant and $n = \vert S \vert$.

{\bf Guarantees under near-independent samples: } Our goal is to prove a result like~\eqref{eq:stdcc}, for the classification problem in Algorithm~\ref{alg:ccitv1}. However, in this case we do not have access to i.i.d samples because the samples in $\mathcal{U}_2'$ do not remain independent. We will see that they are close to independent in some sense. This brings us to one of our main results in Theorem~\ref{thm:mainclass}. 

\begin{theorem}
	\label{thm:mainclass}
	Assume that the joint distribution $f(x,y,z)$ satisfies the conditions in Theorem~\ref{thm:dtv}. Further assume that $f(z)$ has a bounded Lipschitz constant. Consider the classifier $\hat{g}$ in Algorithm~\ref{alg:ccitv1} trained on the set $\mathcal{D}_r$. Let $S = \mathcal{D}_r$. Then according to our definition $g_{S} = \hat{g}$. For $\epsilon>0$ we have:
	\begin{align*}
	(i) \text{  } &R_{{q}}(g_{S}) - R_{{q}} ({g}^*_{q}) \leq  \gamma_n \\ 
	& \triangleq C \lvert L \rvert \left( \left(\sqrt{V}
	+ \sqrt{\log \frac{1}{\delta}} \right)\left( \left(\frac{\log
		(n/\delta)}{n}\right)^{1/3} + \sqrt{\frac{4^{d_z}\log (n/\delta)  + o_n(1/\epsilon)}{n}}\right)
	+ G(\epsilon) \right),
	\end{align*}
	with probability at least $1 - 8\delta$. Here $V$ is the V.C. dimension of the classification function class, $G$ is as defined in Def.~\ref{def:G}, $C$ is an universal constant and $\lvert L \rvert$ is the bound on the absolute value of the loss. 
	
	$(ii)$ Suppose the loss is $L(g(u),\ell) = \mathds{1}_{g(u) \neq \ell}$ (s.t $|L| \leq 1$). Further suppose the class of classifying functions is such that $R_{q}(g^*_q) \leq r_{0} + \eta$. Here, 
$r_0 \triangleq 0.5(1 - d_{TV}(q(x,y,z \vert 1), q(x,y,z \vert 0)))$ is the risk of the Bayes optimal classifier when 
	$q(\ell = 1) = q(\ell=0)$. This is the best loss that any classifier can achieve for this classification problem~\cite{boucheron2005theory}. Under this setting, w.p at least $1 - 8\delta$ we have:
	\begin{align*}
	  \frac{1}{2} \left(1 - d_{TV}(f,f^{CI}) \right) - \frac{b(n)}{2}  \leq R_{{q}}(g_{S}) \leq  \frac{1}{2} \left(1 - d_{TV}(f,f^{CI}) \right) + \frac{b(n)}{2} + \eta + \gamma_n
	\end{align*}
	where $b(n)$ is as defined in Theorem~\ref{thm:dtv}.
\end{theorem}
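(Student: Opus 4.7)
My plan is to attack part (i) via the classical ERM excess-risk decomposition
\[
R_q(g_S) - R_q(g_q^*) \le 2\sup_{g\in\cG} |R_q(g) - R_S(g)|,
\]
and split the empirical risk as $R_S(g) = \tfrac12 R_{S_1}(g) + \tfrac12 R_{S_0}(g)$, where $S_1 = \mathcal{U}_1$ consists of genuine i.i.d.\ samples from $f$ and $S_0 = \mathcal{U}_2'$ are the bootstrapped label-$0$ samples. The $S_1$ piece is immediate from a standard VC-based uniform deviation of order $|L|\sqrt{V\log(1/\delta)/n}$. The real work is $S_0$, which is not i.i.d., and for it I would view the entire ERM objective as a deterministic function of the $3n$ underlying i.i.d.\ draws $\mathcal{U}_1 \cup \mathcal{U}_2 \cup \mathcal{U}_3$ and push everything through bounded-differences concentration, then union-bound via the Sauer--Shelah growth function.

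The key structural lemma I would establish is a geometric bounded-influence claim: in $\mathbb{R}^{d_z}$ under $\ell_2$, any single point of $\mathcal{U}_3$ can be the $1$-NN of at most $c\cdot 2^{d_z}$ points of $\mathcal{U}_2$, by a standard cone-decomposition (kissing-number) argument. Therefore replacing any one underlying sample perturbs $R_S(g)$ by $O(2^{d_z} |L|/n)$ --- only one summand of $R_{S_1}$ or one $Y$-swap of $R_{S_0}$ is affected if we alter a point of $\mathcal{U}_1$ or $\mathcal{U}_2$, and at most $O(2^{d_z})$ summands of $R_{S_0}$ are affected if we alter a point of $\mathcal{U}_3$. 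McDiarmid then yields per-$g$ concentration $|R_S(g) - \EE[R_S(g)]| \le C|L|\sqrt{4^{d_z}\log(1/\delta)/n}$, and restricting $\cG$ to its projection on the $\le 2n$ points (of size $\le n^V$ by Sauer's lemma) and union-bounding lifts this to the uniform $\sqrt{V}\cdot \sqrt{4^{d_z}\log(n/\delta)/n}$ factor of $\gamma_n$. Because each bootstrapped sample has marginal exactly $\phi = q(\cdot\mid \ell=0)$ by the definition of $\phi$, the expectation aligns with $R_q(g)$ with no residual bias.

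The slower $(\log(n/\delta)/n)^{1/3}$ summand arises when refining the crude $2^{d_z}$ influence bound on the bad event that some 1-NN distance is atypical. I would couple each $u'_i \in \mathcal{U}_2'$ with a fresh conditional draw from $f_{Y\mid Z}(\cdot\mid Z_i)$; the coupling is exact whenever $f(Z_i) \geq \epsilon$ and the 1-NN of $Z_i$ in $\mathcal{U}_3$ lies within an $\eta$-ball. Standard NN concentration (together with the Lipschitz assumption on $f(z)$ and Assumption~\ref{assumption1}) gives failure probability of order $\exp(-n\gamma_{d_z}\eta^{d_z}\epsilon) + G(\epsilon)$, and optimally trading off $\eta$ against this exponent while recording the truncation cost produces both the $(\log(n/\delta)/n)^{1/3}$ term and the $G(\epsilon)$ and residual $o_n(1/\epsilon)$ summands inside $\gamma_n$.

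Part (ii) is then a corollary. Under the balanced-label $q$, the Bayes floor equals $r_0 = \tfrac12(1 - d_{TV}(q(\cdot\mid 1), q(\cdot\mid 0))) = \tfrac12(1 - d_{TV}(f,\phi))$; applying the triangle inequality for $d_{TV}$ and Theorem~\ref{thm:dtv} yields $|r_0 - \tfrac12(1 - d_{TV}(f,f^{CI}))| \le b(n)/2$. The lower bound follows from $R_q(g_S) \ge r_0$, and the upper bound chains $R_q(g_S) \le R_q(g_q^*) + \gamma_n \le r_0 + \eta + \gamma_n$, giving exactly the two-sided statement. The main obstacle throughout is unambiguously the uniform convergence for the dependent block $S_0$: ordinary Rademacher symmetrization presupposes i.i.d.\ data, so the whole bound must be rebuilt from bounded-differences concentration on the underlying draws, and the delicate balancing of coupling failure versus truncation level (which determines the $n^{-1/3}$ rate) is the subtle point of the whole argument.
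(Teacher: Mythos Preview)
Your treatment of part~(ii) is essentially identical to the paper's (Theorem~4 in the appendix): triangle inequality on $d_{TV}(f,\phi)$ via $f^{CI}$, then sandwich $R_q(g_S)$ between the Bayes floor $r_0$ and $r_0+\eta+\gamma_n$.

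For part~(i), however, your route diverges sharply from the paper's, and your key structural lemma is false as stated. The cone/kissing-number bound---``any single point of $\mathcal{U}_3$ can be the $1$-NN of at most $c\cdot 2^{d_z}$ points of $\mathcal{U}_2$''---holds only when the nearest-neighbor search is over a set that \emph{contains the query points themselves}. In that case, if $p$ is the NN of both $q_1$ and $q_2$, one gets $\|q_i-p\|<\|q_1-q_2\|$ and hence an angular separation $\angle q_1 p q_2>\pi/3$. But here the search is from $\mathcal{U}_2$ into the \emph{disjoint} set $\mathcal{U}_3$, so $q_2\notin\mathcal{U}_3$ and the inequality $\|q_1-p\|<\|q_1-q_2\|$ need not hold. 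In one dimension already, placing a single isolated point of $\mathcal{U}_3$ near a cluster of $\mathcal{U}_2$ shows arbitrarily many points can share the same NN. Since McDiarmid requires an almost-sure bounded-difference constant, this breaks your $O(2^{d_z}|L|/n)$ influence bound and hence the whole concentration step. A second, independent gap: even granting bounded differences, ``union-bounding over the Sauer projection on the $2n$ data points'' after a per-$g$ McDiarmid is not valid, because that projection is data-dependent---you would still need to control $\EE[\sup_{g}(R_S(g)-R_q(g))]$, which is exactly the symmetrization step you correctly flag as unavailable. Your coupling sketch might rescue the expectation bound, but it does not obviously deliver the $n^{-1/3}$ rate.

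The paper circumvents both issues by a different mechanism. It defines probability-$\alpha_n$ balls $B_n(z_i)$ around each $z_i$, builds an intersection graph on $\mathcal{U}_2$ (edge $i\sim j$ iff $B_n(z_i)\cap B_n(z_j)\neq\emptyset$), and bounds its maximum degree \emph{probabilistically} by $O(n\beta_n)$ with $\beta_n=4^{d_z}\alpha_n(1+o_n(1/\epsilon))$ (Lemma~4 and the Lipschitz assumption enter here). It then colors the graph into $k\approx n\alpha_n^{2/3}+8n\beta_n$ independent sets (Lemma~8); conditioned on $\{Z_i,N(Z_i)\}$, samples within each independent set are genuinely independent, so standard VC bounds apply inside each color class. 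The residual dependence through the multinomial counts $N(Z_i)$ is removed by a multinomial-to-Poisson approximation (Lemma~7), which costs $O(|S_t|\alpha_n)$ per block. Balancing $\sqrt{k/n}$ against $\alpha_n n/k$ in the choice of $k$ is precisely what produces the $(\log(n/\delta)/n)^{1/3}$ term, while the $\sqrt{4^{d_z}\log(n/\delta)/n}$ term comes from $\sqrt{\beta_n}$.
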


We prove Theorem~\ref{thm:mainclass} as Theorem~\ref{thm:general} and Theorem~\ref{thm:riskdtv} in the appendix. In part $(i)$ of the theorem we prove that generalization bounds hold even when the samples are not exactly i.i.d. Intuitively, consider two sample inputs $u_i, u_j \in \mathcal{U}_1$, such
that corresponding $Z$ coordinates $z_i$ and $z_j$ are far away. Then we expect the
resulting samples $u'_i$ and $u'_j$ (in $\mathcal{U}_2'$) to be nearly-independent. By
carefully capturing this notion of spatial near-independence, we
prove generalization errors in Theorem~\ref{thm:general}. Part $(ii)$ of the theorem essentially implies that the error of the trained classifier will be close to $0.5$ (l.h.s) when $f \sim f^{CI}$ (under $\mathcal{H}_0$). On the other hand under $\mathcal{H}_1$ if $d_{TV}(f,f^{CI}) > 1 - \gamma$, the error will be less than $0.5(\gamma + b(n)) + \gamma_{n}$ which is small.

\section{Empirical Results}
In this section we provide empirical results comparing our proposed algorithm and other state of the art algorithms. The algorithms under comparison are: $(i)$ CCIT - Algorithm~\ref{alg:ccitv2} in our paper where we use XGBoost~\cite{chen2016xgboost} as the classifier. In our experiments, for each data-set we boot-strap the samples and run our algorithm $B$ times. The results are averaged over $B$ bootstrap runs\footnote{The python package for our implementation can be found \href{https://github.com/rajatsen91/CCIT}{here (https://github.com/rajatsen91/CCIT)}.}. $(ii)$ KCIT - Kernel CI test from~\cite{zhang2012kernel}. We use the Matlab code available online. $(iii)$ RCIT - Randomized CI Test from~\cite{strobl2017approximate}. We use the R package that is publicly available. %In Section~\ref{sec:synth} we provide results on simulated data-sets, while in Section~\ref{sec:real} we test our algorithm on a real-world dataset. 

%\subsection{Synthetic Experiments}
%\label{sec:synth}
%\begin{wrapfigure}{R}{0.45\textwidth}
%	\begin{center}
%		\includegraphics[width=0.45\textwidth]{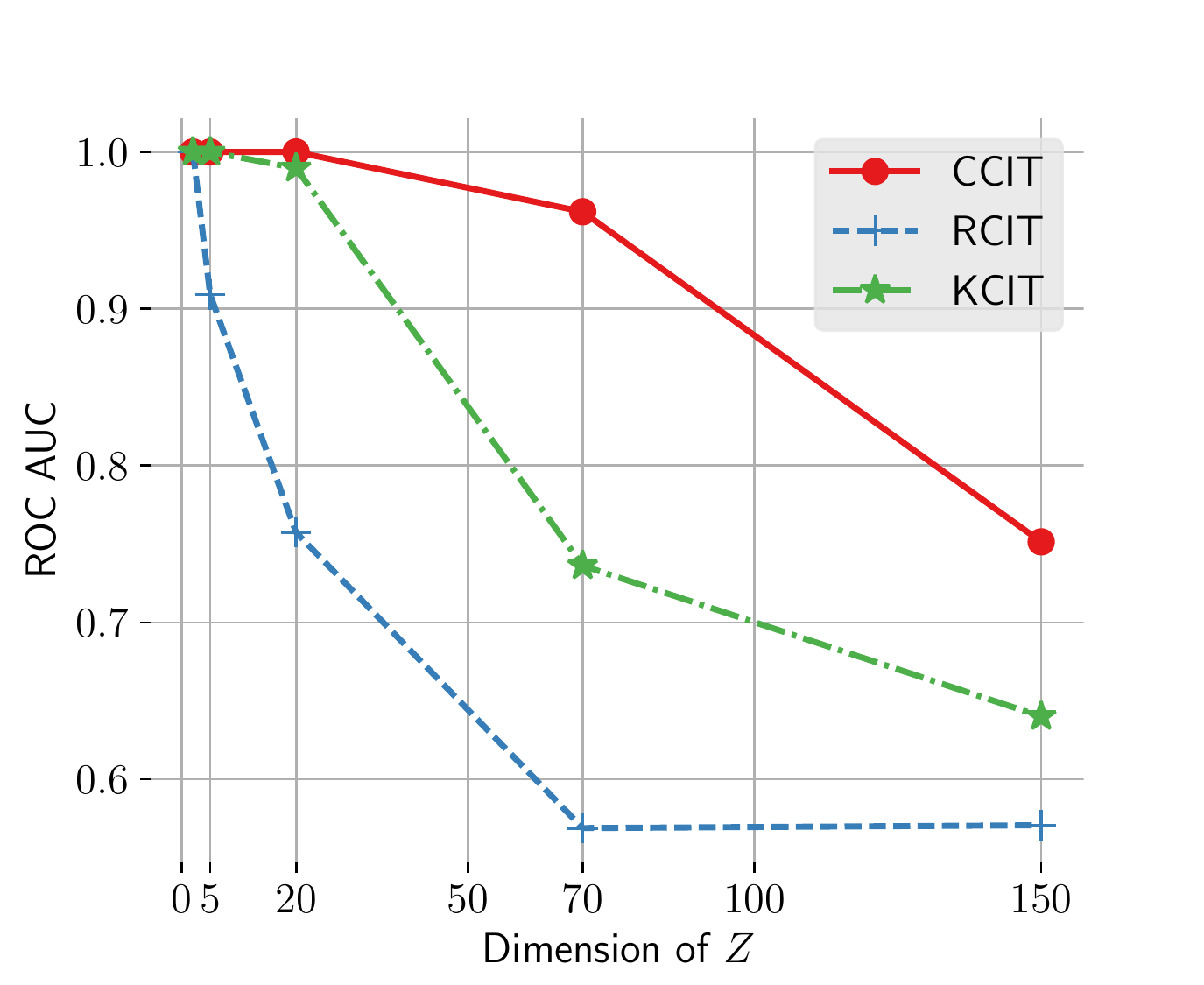}
%	\end{center}
%	\caption{Performance of CCIT, KCIT and RCIT in the post-nonlinear noise setting. In generating each point in the plots, $300$ data-sets are generated where half of them are according to $\mathcal{H}_0$ while the rest are according to $\mathcal{H}_1$. The algorithms are run on each of them, and the ROC AUC score is plotted. In the plot the number of samples $n = 1000$, while the dimension of $Z$ varies.} \label{fig:dim}
%\end{wrapfigure}

\subsection{Synthetic Experiments}

We perform the synthetic experiments in the regime of \textit{post-nonlinear noise} similar to~\cite{zhang2012kernel}. In our experiments $X$ and $Y$ are dimension $1$, and the dimension of $Z$ scales (motivated by causal settings and also used in~\cite{zhang2012kernel,strobl2017approximate}). $X$ and $Y$ are generated according to the relation $G(F(Z)+\eta)$ where $\eta$ is a noise term and $G$ is a non-linear function, when the $\mathcal{H}_0$ holds. In our experiments, the data is generated as follows: $(i)$ when $X \independent Y \vert Z$, then each coordinate of $Z$ is a Gaussian with unit mean and variance, $X = \cos (a^TZ + \eta_1)$ and $Y = \cos (b^TZ + \eta_2)$. Here, $a, b \in \mathbb{R}^{d_z}$ and $\norm{a} = \norm{b} = 1$. $a$,$b$ are fixed while generating a single dataset. $\eta_1$ and $\eta_2$ are zero-mean Gaussian noise variables, which are independent of everything else. We set $Var(\eta_1) = Var(\eta_2) = 0.25$. $(ii)$ when $X \centernot{\independent} Y \vert Z$, then everything is identical to $(i)$ except that $Y = \cos (b^TZ + cX + \eta_2)$ for a randomly chosen constant $c \in [0,2]$.     

%\begin{figure*}
%	\centering
%	\subfloat[][]{\includegraphics[width = 0.45\linewidth]{dim.pdf}\label{fig:dim}} \hfill
%	\subfloat[][]{\includegraphics[width = 0.45\linewidth]{}\label{fig:sample}} 
%	\caption{ \small Performance of CCIT, KCIT and RCIT in the post-nonlinear noise setting. In generating each point in the plots, $300$ data-sets are generated where half of them are according to $\mathcal{H}_0$ while the rest are according to $\mathcal{H}_1$. The algorithms are run on each of them, and the ROC AUC score is plotted. In $(a)$ the number of samples $n = 1000$, while the dimension of $Z$ varies. In $(b)$ the dimension of $Z$ is $70$ while $n$ varies.}
%	\label{fig:dimplots}
%\end{figure*} 

In Fig.~\ref{fig:dim}, we plot the performance of the algorithms when the dimension of $Z$ scales. For generating each point in the plot, $300$ data-sets were generated with the appropriate dimensions. Half of them are according to $\mathcal{H}_0$ and the other half are from $\mathcal{H}_1$ Then each of the algorithms are run on these data-sets, and the ROC AUC (Area Under the Receiver Operating Characteristic curve) score is calculated from the true labels (CI or not CI) for each data-set and the predicted scores. We observe that the accuracy of CCIT is close to $1$ for dimensions upto $70$, while all the other algorithms do not scale as well. In these experiments the number of bootstraps per data-set for CCIT was set to $B = 50$. We set the threshold in Algorithm~\ref{alg:ccitv2} to $\tau = 1/\sqrt{n}$, which is an upper-bound on the expected variance of the test-statistic when $\mathcal{H}_0$ holds.

%In Fig.~\ref{fig:sample}, we plot the ROC AUC as the number of i.i.d samples $n$ from $f(x,y,z)$ varies. The experiments are performed in a setting identical to above. We observe that the accuracy of CCIT is close to $1$ for $n \geq 1000$. It is curious to observe that the performance of KCIT do not increase in a stable manner when the number of samples are increased. This can be possibly attributed to the kernel width hyper-parameter which is set in the original code and has been observed to be sub-optimal in~\cite{strobl2017approximate}. However, a better implementation was not available.   

\subsection{Flow-Cytometry Dataset}
\label{sec:real}
We use our CI testing algorithm to verify CI relations in the protein network data from the flow-cytometry dataset~\cite{sachs2005causal}, which gives expression levels of $11$ proteins under various experimental conditions. The ground truth causal graph is not known with absolute certainty in this data-set, however this dataset has been widely used in the causal structure learning literature. We take three popular learned causal structures that are recovered by causal discovery algorithms, and we verify CI relations assuming these graphs to be the ground truth. The three graph are: $(i)$ consensus graph from~\cite{sachs2005causal} (Fig. 1(a) in~\cite{mooij2013cyclic}) $(ii)$ reconstructed graph by Sachs et al.~\cite{sachs2005causal} (Fig. 1(b) in~\cite{mooij2013cyclic}) $(iii)$ reconstructed graph in~\cite{mooij2013cyclic} (Fig. 1(c) in~\cite{mooij2013cyclic}). 

For each graph we generate CI relations as follows: for each node $X$ in the graph, identify the set $Z$ consisting of its parents, children and parents of children in the causal graph. Conditioned on this set $Z$, $X$ is independent of every other node $Y$ in the graph (apart from the ones in $Z$). We use this to create all CI conditions of these types from each of the three graphs. In this process we generate over $60$ CI relations for each of the graphs. In order to evaluate false positives of our algorithms, we also need relations such that $X \centernot{\independent} Y \vert Z$. For, this we observe that if there is an edge between two nodes, they are never CI given any other conditioning set. For each graph we generate $50$ such non-CI relations, where an edge $X \leftrightarrow Y$ is selected at random and a conditioning set of size $3$ is randomly selected from the remaining nodes. We construct $50$ such negative examples for each graph.
\begin{figure*}
	\centering
	
	\subfloat[][]{\includegraphics[width = 0.31\linewidth]{dim.pdf}\label{fig:dim}} \hfill
	\subfloat[][]{\includegraphics[width = 0.31\linewidth]{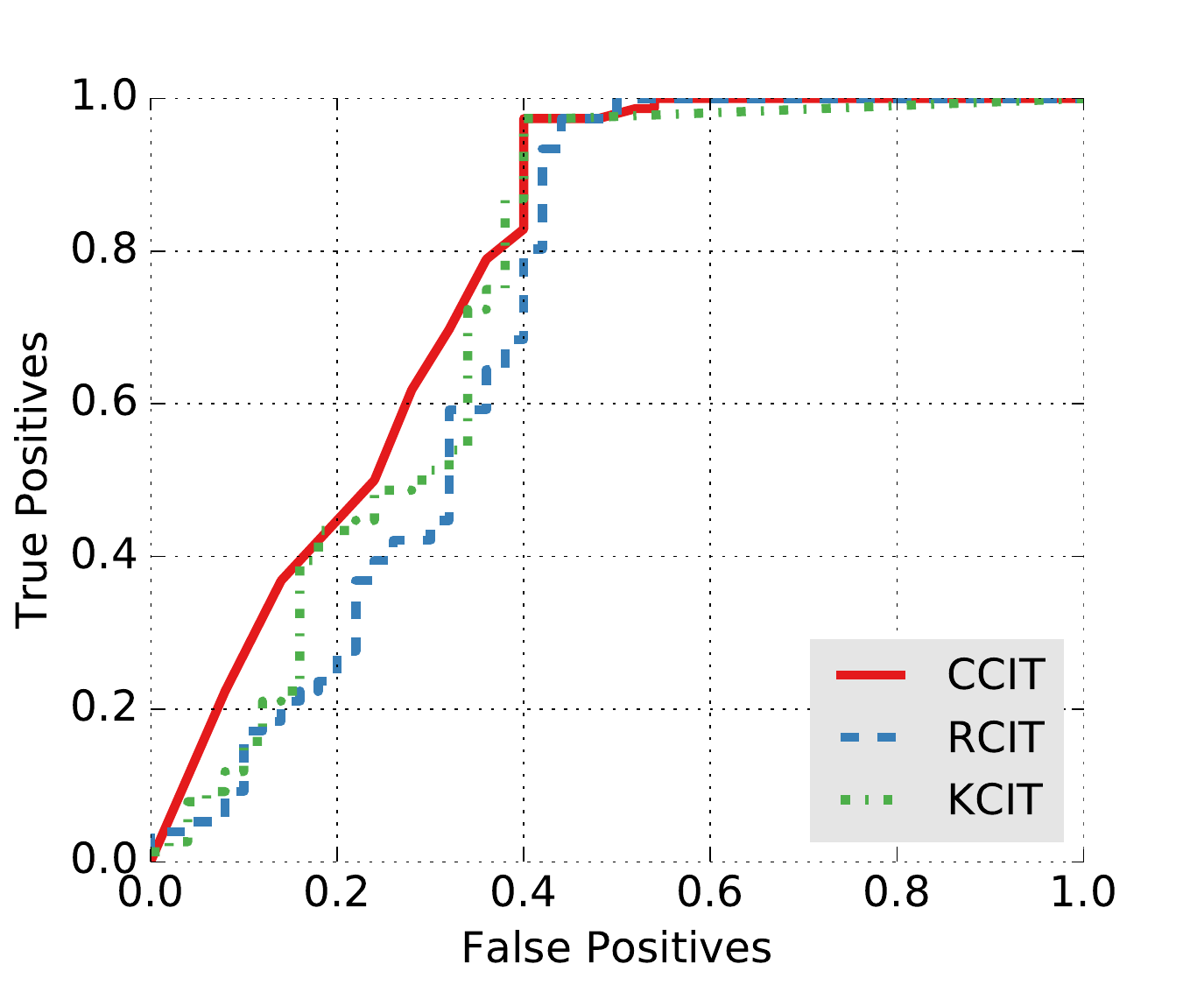}\label{fig:cyto}} \hfill
	\subfloat[][]{
		
		\resizebox{0.31\linewidth}{!}{
		\begin{tabular}[b]{ lccc }
			\hline
			Algo. & Graph $(i)$ & Graph $(ii)$ & Graph $(iii)$\\ \midrule 
			CCIT & {\color{blue} 0.6848} & {\color{blue} 0.7778} & {\color{blue} 0.7156}\\ 
			RCIT & 0.6448 & 0.7168 & 0.6928\\
			KCIT & 0.6528 & 0.7416 & 0.6610\\ \bottomrule
			\hline
		\end{tabular}
	}
		
		\label{fig:tab}

	} 
	\caption{ \small In (a) we plot the performance of CCIT, KCIT and RCIT in the post-nonlinear noise synthetic data. In generating each point in the plots, $300$ data-sets are generated where half of them are according to $\mathcal{H}_0$ while the rest are according to $\mathcal{H}_1$. The algorithms are run on each of them, and the ROC AUC score is plotted. In $(a)$ the number of samples $n = 1000$, while the dimension of $Z$ varies. In $(b)$ we plot the ROC curve for all three algorithms based on the data from Graph $(ii)$ for the flow-cytometry dataset. The ROC AUC score for each of the algorithms are provided in $(c)$, considering each of the three graphs as ground-truth. }
	\label{fig:cytoresults}
\end{figure*}
In Fig.~\ref{fig:cytoresults}, we display the performance of all three algorithms based on considering each of the three graphs as ground-truth. The algorithms are given access to observational data for verifying CI and non-CI relations. In Fig.~\ref{fig:cyto} we display the ROC plot for all three algorithms for the data-set generated by considering graph $(ii)$. In Table~\ref{fig:tab} we display the ROC AUC score for the algorithms for the three graphs. It can be seen that our algorithm outperforms the others in all three cases, even when the dimensionality of $Z$ is fairly low (less than 10 in all cases). An interesting thing to note is that the edges (pkc-raf), (pkc-mek) and (pka-p38) are there in all the three graphs. However, all three CI testers CCIT, KCIT and RCIT are fairly confident that these edges should be absent. These edges may be discrepancies in the ground-truth graphs and therefore the ROC AUC of the algorithms are lower than expected.

\FloatBarrier

\section{Conclusion}
In this paper we present a model-powered approach for CI tests by converting it into binary classification, thus empowering CI testing with powerful supervised learning tools like gradient boosted trees. We provide an efficient nearest-neighbor bootstrap which makes the reduction to classification possible. We provide theoretical guarantees on the bootstrapped samples, and also risk generalization bounds for our classification problem, under non-i.i.d near independent samples. In conclusion we believe that model-driven data dependent approaches can be extremely useful in general statistical testing and estimation problems as they enable us to use powerful supervised learning tools.

\bibliographystyle{plain}
\bibliography{citest.bib}

\clearpage

\appendix
\section{Guarantees on Bootstrapped Samples}
\label{proof:boot}
In this section we prove that the samples generated in Algorithm~\ref{alg:NNB}, through the nearest neighbor bootstrap, are close to samples generated from $f^{CI}(x,y,z) = f_{X \vert Z}(x \vert z)f_{Y \vert Z}(y \vert z)f_{Z}(z)$. The \textit{closeness} is characterized in terms of total variational distance as in Theorem~\ref{thm:dtv}. Suppose $2n$ i.i.d samples from distribution $f(x,y,z)$ are supplied to Algorithm~\ref{alg:NNB}. Consider a typical sample $(X,Y,Z) \sim f(x,y,z)$, which is modified to produce a typical sample in $\mathcal{U}_2'$ (refer to Algorithm~\ref{alg:NNB}) denoted by $(X,Y',Z)$. Here, $Y'$ are the $Y$-coordinates of a sample $(X',Y',Z')$ in $\mathcal{U}_2$ such that $Z'$ is the nearest neighbor of $Z$. Let us denote the marginal distribution of a typical sample in $\mathcal{U}_2'$ by $\phi_{X,Y,Z}(x,y,z)$, i.e $(X,Y',Z) \sim \phi_{X,Y,Z}(x,y,z)$. Now we are at a position to prove Theorem~\ref{thm:dtv}. 

\begin{proof}[Proof of Theorem~\ref{thm:dtv}]
Let $f_{Z'|z}(z')$ denote the conditional p.d.f of the variable $Z'$ (that is the nearest neighbor of sample $Z$ in $\mathcal{U}_2$),  conditioned on $Z=z$. Therefore, the distribution of the new-sample is given by,
\begin{align}
\phi_{X,Y,Z}(x,y,z) = f_{X \vert Z}(x \vert z)f_{Z}(z) \int f_{Y|Z}(y \vert z')f_{Z'|z}(z') dz' \label{eq:NNpdf}.
\end{align}
We want to bound the total variational distance between $\phi_{X,Y,Z}(x,y,z)$ and 
$f^{CI}_{X,Y,Z}(x,y,z)$. We have the following chain:

\begin{align*}
2*d_{TV}(\phi,f^{CI}) &= \int _{x,y,z} \Bigg \lvert f_{X \vert Z}(x \vert z)f_{Y \vert Z}(y \vert z)f_{Z}(z) - f_{X \vert Z}(x \vert z)f_{Z}(z) \int f_{Y|Z}(y \vert z')f_{Z'|z}(z') dz' \Bigg \rvert dxdydz \\
&=  \int _{x,y,z} f_{X \vert Z}(x \vert z)f_{Y \vert Z}(y \vert z)f_{Z}(z) \Bigg \lvert \int \left(1 - \frac{f_{Y|Z}(y \vert z')}{f_{Y|Z}(y \vert z)} \right) f_{Z'|z}(z') dz'\Bigg \rvert dxdydz \\
& \leq  \int _{x,y,z} f_{X \vert Z}(x \vert z)f_{Y \vert Z}(y \vert z)f_{Z}(z)  \int \Bigg \lvert 1 - \frac{f_{Y|Z}(y \vert z')}{f_{Y|Z}(y \vert z)} \Bigg \rvert  f_{Z'|z}(z') dz'dxdydz \\
& =  \int _{x,z,z'} f_{X \vert Z}(x \vert z)f_{Z}(z) f_{Z'|z}(z')  \left( \int \Bigg \lvert f_{Y|Z}(y \vert z)  - f_{Y|Z}(y \vert z') \Bigg \rvert  dy \right)   dz' dx dz \\
\end{align*}
\begin{align*}
& \leq  \int _{x,z,\lVert z'-z \rVert_2 \leq \epsilon} f_{X \vert Z}(x \vert z)f_{Z}(z) f_{Z'|z}(z')  \left( \int \Bigg \lvert f_{Y|Z}(y \vert z)  - f_{Y|Z}(y \vert z') \Bigg \rvert  dy \right)   dz' dx dz + \\
& 2 \int_{_{x,z,\lVert z'-z \rVert_2 > \epsilon}}  f_{X \vert Z}(x \vert z)f_{Z}(z) f_{Z'|z}(z') dz' dx dz \\
& \leq \int _{x,z,\lVert z'-z \rVert_2 \leq \epsilon} f_{X \vert Z}(x \vert z)f_{Z}(z) f_{Z'|z}(z')  \left( \int \Bigg \lvert f_{Y|Z}(y \vert z)  - f_{Y|Z}(y \vert z') \Bigg \rvert  dy \right)   dz' dx dz + \\
& 2* \PP\left( \lVert z'-z \rVert_2 > \epsilon \right) \numberthis    
\label{eq:intdiff}
\end{align*}

By Pinsker's inequality, we have:
\begin{align}
\int _{y} \Bigg \lvert f_{Y|Z}(y \vert z) -f_{Y|Z}(y \vert z')  \Bigg \rvert dy \leq \sqrt{ \frac{1}{2} \int_y  \log \frac{f(y|z)}{f(y|z')}  f(y|z) dy }
\label{eqn:pinsker}
\end{align}

%We have the following chain of inequalities upper bounding the right hand side:
%\begin{align}
% D\left( f^{CI}||g \right) &= \int \left(\int \log \frac{f(y|z)}{ \int f(y|z') f_{Z'|z}(z') dz' } f(y|z) dy \right) f(x,z) dx dz \nonumber \\
%  \hfill                         & \leq \int \left(\int \log \frac{f(y|z)}{f(y|z')} f_{Z'|z}(z') dz' f(y|z) dy \right)  f(x,z)dx dz  \nonumber \\
%  \hfill                         & =  \int \left(\int \log \frac{f(y|z)}{f(y|z')}  f(y|z) dy \right) f_{Z'|z}(z')  f(x,z) dz'dx dz 
%  \label{eqn:finalexp1}
%\end{align}

By Taylor's expansion with second-order residual, we have:
\begin{align}
\int \log \frac{f(y|z)}{f(y|z')}  f(y|z) dy  =  \frac{1}{2}(z'-z)^T \mathbf{I}_a(z) (z'-z)
\label{eqn:taylor}
\end{align}
for some $a = \lambda z + (1-\lambda)z'$ where $0 \leq \lambda \leq 1$. 

Under Assumption~\ref{assump1} and $\epsilon < \epsilon_1$ we have,
\begin{align}
(z'-z)^T \mathbf{I}_a(z) (z'-z) \leq \beta \lVert z' -z  \rVert_2^2.
\label{eqn:boundeigen}
\end{align}

Then, (\ref{eqn:boundeigen}), (\ref{eqn:taylor}), (\ref{eqn:pinsker}) and (\ref{eq:intdiff}) imply:
\begin{align}
2*d_{TV}( \phi,f^{CI} )  \leq \sqrt{\frac{\beta}{4} \mathbb{E}[ \lVert z'-z \rVert_2 \mathds{1}_{ \lVert z'-z \rVert_2 \leq \epsilon} ] } + 2 \PP\left( \lVert z'-z \rVert_2 > \epsilon \right) 
\label{eqn:expdtv}
\end{align}

We now bound both terms separately. Let $Z_1,Z_2 ...,Z_n$ be distributed i.i.d according to $f(z)$. Then, $f_{Z'|z} (\cdot)$ is the pdf of the nearest neighbor of $z$ among $Z_1, \ldots, Z_n$. 
\subsubsection{Bounding  the first term}

%We assume the following about the smoothness of the probability density function $f(z)$. The smoothness assumptions (the first two) are a subset of the assumptions made in \cite{gao2016demystifying} (Assumption 1, Page 5) for entropy estimation.
%
%\begin{assumption}\label{assumption1}
%The probability density function $f(z)$ satisfies the following:
% \begin{enumerate}
%   \item $\int f(z) \exp \left( -b f(z) \right) \leq c_1 \exp ( - c_2 b), ~ \forall b \geq 1 $ where $c_1,c_2$ are universal constants.
%   \item $f(z)$ is twice continuously differentiable and the Hessian matrix $H_f$ satisfies $\lVert H_f(z) \rVert_2 \leq c_d$ almost everywhere.
%   \item $\int f(z)^{1-1/d} dz \leq c_3 ,~ \forall d \geq 2 $
% \end{enumerate}
%\end{assumption}

In this section we will use $d$ in place of $d_z$ for notational simplicity. Let $\gamma_d$  be the volume of the unit $\ell_2$ ball in dimension $d$. Let $S=\{z: f(z) \geq 2*c_d \epsilon^2 \}$. This implies, that for $z \in S$:
\begin{align} \label{eqn:ineq1}
f(z) - c_d \epsilon^2 \geq f(z)/2
\end{align}
% Further, let $\epsilon$ be such that:
%\begin{align}\label{epscondition}
% \epsilon^2 c_d < \frac{1}{2}  
%\end{align}

Let $Z' = \argmin_{Z_1,Z_2 \ldots, Z_{n}} \lVert  Z_i - z \rVert_2$ be the random variable which is the nearest neighbor to a point $z$ among $n$ i.i.d samples $Z_i$ drawn from the distribution whose pdf is $f(z)$ that satisfies assumption \ref{assumption1}. Let $r(z) = ||z-z'||_2 $. Let $F(r)$ be the CDF of the random variable $R$. Since $R$ is a non-negative random variable,
\begin{align} \label{eqn:firsttermbound}
\mathbb{E}_{R}[ r(z) \mathds{1}_{r \leq \epsilon}] = \int_{r \leq \epsilon} r dF(r) = \left[ rF(r) \right]_{0}^{\epsilon} - \int_{r \leq \epsilon} F(r) dr \leq \int _{r \leq \epsilon} P(R>r) dr 
\end{align}
For any $r \leq \epsilon$, observe that
\begin{align*}
\Pr(R > r)
& = \text{Pr}( \nexists i: z_i \in B(z,r)) \\
& = \left(1 - \text{Pr}( Z \in B(z,r))\right)^{n} \\
& \leq \exp ( - n \text{Pr}( Z \in B(z,r))) \numberthis \label{eqn:boundprob1}
\end{align*}

We have the following chain to bound $\text{Pr}( Z \in B(z,r))$. Let $a=\lambda z + (1-\lambda) t$.
\begin{align*}
\lvert \text{Pr}( Z \in B(z,r)) - f(z) \gamma_d r^d \rvert  &\leq  \Bigg \lvert \int_{t \in B(z,r)}  (f(t)-f(z)) dt  \Bigg \rvert \\
\hfill   &= \Bigg \lvert \int_{t \in B(z,r)} (\nabla f(z)^T .(t-z) + (t-z)^T H_f(a) (t-z))dt \Bigg \rvert \\
\hfill & \leq \max_{t \in B(z,r)} \lVert H_f(t) \rVert_2  \int_{t \in B(z,r)} \lVert t-z \rVert_2^2 dt  \\
& \leq c_d \gamma_d r^{d+2} \numberthis \label{eqn:boundprob2}
\end{align*}
By putting together (\ref{eqn:firsttermbound}),(\ref{eqn:boundprob1}) and(\ref{eqn:boundprob2}), we have:
\begin{align*}
\mathbb{E}_{R}[ r(z) \mathds{1}_{r \leq \epsilon}] &{\leq} \mathds{1}_{z \in S} \left( \int_{r \leq \epsilon} e^{- n  \gamma_d r^d (f(z)- c_d r^2 )} dr \right)  + \epsilon \mathds{1}_{z \in S^c} \\
\hfill        & \leq \mathds{1}_{z \in S}  \left( \int_{r \leq \epsilon} e^{- \frac{1}{2}n \gamma_d f(z) r^d } dr \right)+ \epsilon \mathds{1}_{z \in S^c} \\
\hfill        & \leq  \mathds{1}_{z \in S}  \left( \int_{t \leq \frac{1}{2} n \gamma_d f(z) \epsilon^d} \frac{e^{-t} }{d [\frac{1}{2} n \gamma _d f(z)]^{1/d} } t^{-1+1/d} dt  \right) +  \epsilon \mathds{1}_{z \in S^c} \\
\hfill        & \leq  \mathds{1}_{z \in S} \left( \frac{2^{1/d}}{d (n \gamma_d f(z))^{1/d}} \Gamma(1/d) \right)+ \epsilon \mathds{1}_{z \in S^c}
\end{align*}

Therefore, the first term in bounded by:
\begin{align}
\mathbb{E}[ \lVert z'-z \rVert_2 \mathds{1}_{ \lVert z'-z \rVert_2 \leq \epsilon} ] & \leq \mathbb{E}_Z \left[ \mathbb{E}_{R}[ r(z) \mathds{1}_{r \leq \epsilon}] \right] \\
&  \leq  \mathbb{E}_Z \left[ \frac{2^{1/d}}{d (n \gamma_d f(z))^{1/d}} \Gamma(1/d) + \epsilon \mathds{1}_{z \in S^c} \right] \\
\hfill & \leq \frac{c_3 * 2^{1/d}}{(n \gamma_d)^{1/d} d} \Gamma(1/d) + \epsilon * G \left(2c_d \epsilon^2 \right) \\
\end{align}

\subsubsection{ Bounding the second term}
We now bound the second term as follows:
\begin{align*}
\Pr(||z - z'||_2 > \epsilon) & \leq \mathbb{E}_Z [ \text{Pr} (R>\epsilon) ]  \\
& \leq \mathbb{E}_Z  \left[ \mathds{1}_{z \in S} \exp \left( - \frac{n \gamma_df(z) \epsilon^d }{2} \right) \right] +  \mathrm{Pr} ( z \in S^c)  \\
& \leq \exp \left( -n \gamma_d c_d \epsilon^{d+2} /2\right) + G \left( 2 c_d \epsilon^2 \right)
\end{align*}
Substituting in (\ref{eqn:expdtv}), we have:
\[ 2*d_{TV}( g,f^{CI} )  \leq   \sqrt{\frac{\beta}{4} \frac{c_3 * 2^{1/d} \Gamma(1/d) }{(n \gamma_d)^{1/d} d}  + \frac{\beta \epsilon G  \left(2c_d \epsilon^2 \right) }{4} }+ 2 \exp \left( -n \gamma_d c_d \epsilon^{d+2}/2 \right) + 2G \left( 2 c_d \epsilon^2 \right)
\]

Substitute $d_z$ in place of $d$ to recover Theorem~\ref{alg:NNB}. 

\end{proof}  

\section{Generalization Error Bounds on Classification}
\ignore{
Recall that
$\cG$ is the set of classifiers under consideration. Let $\tilde{q}$
be our \emph{ideal} distribution for $q$ given by
\[
\tilde{q}(x,y,z|\ell=1) = f_{X,Y,Z}(x,y,z),
\]
\[
\tilde{q}(x,y,z|\ell=1) = f^{CI}_{X,Y,Z}(x,y,z),
\]
and $\tilde{q}(\ell=1) = \tilde{q}(\ell=0) = 0.5$.  Let $L$ be a
bounded loss function.  For simplicity, $L$ can be $0-1$ loss, i.e.,
$L(g(\xyz),\ell) = 1_{g(\xyz)\neq \ell}$.  For a distribution $q$ and a classifier $g$,
let 
\[
R_{q}(g) = \EE_{u,\ell \sim \tilde{q}} [L(g(\xyz),\ell)].
\]
Let $g^*_{\tilde{q}}$ be the optimal
classifier under $\tilde{q}$.
\[
g^*_{\tilde{q}} \triangleq \arg \min_{g \in \cG} R_{\tilde{q}}(g).
\]
Similarly for a set of samples $S$ and a classifier $g$, let
\[
R_S(g) \triangleq \frac{1}{|S|}\sum_{u,\ell \in S} L(g(\xyz),\ell).
\]
Let $g_S$ be the classifier that minimizes the empirical loss on the
observed set of samples i.e.,
\[
g_S \triangleq \arg \min_{g \in \cG} R_{S}(g).
\]
If the set of samples $S$ are generated independently from $\tilde{q}$, then
standard results from the learning theory states that with probability
$\geq 1-\delta$,
\[
R_{\tilde{q}}(g_S) \leq R_{\tilde{q}}(g^*_{\tilde{q}}) + C
\sqrt{\frac{V}{n}} + \sqrt{\frac{2 \log (1/\delta)}{n}},
\]
where $V$ is the VC dimension of the classification model and $C$ is
some universal constant.
However, in conditional independence testing, we do not have access
for samples from $\tilde{q}$. Hence we sample from distribution $q$
given by Equation~\eqref{}.
In Theorem~\ref{thm:general}, we show that if samples $S$ are
generated according to Theorem~\ref{}, then 
\[
R_{q}(g_S) \leq R_{q}(g^*_q) + \gamma_n,
\]
Since the loss function $L$ is bounded, by Theorem~\ref{}, for any classifier $g$,
\[
|R_{q}(g) - R_{\tilde{q}}(g)| \leq b(n) |L|,
\]
where $|L|$ is the upper bound on $L$. Furthermore,
\[
|R_q(g^*_q) - R_{\tilde{q}}(g^*_{\tilde{q}})| \leq b(n) |L|.
\]
Hence, 
\begin{theorem}
If $g_S$ satisfies Theorem~\ref{thm:general}, then
\[
R_{\tilde{q}}(g_S) \leq R_{\tilde{q}}(g^*_{\tilde{q}}) + \gamma_n + 2 b(n)|L|.
\]
\end{theorem}
Hence, our goal is Theorem~\ref{thm:general}. Note that unlike
standard tools from learning theory, the samples are not
independent. However, consider two sample inputs $u_i$ and $u_j$, such
that corresponding $z_i$ and $z_j$ are far away, then we expect the
resulting samples $u'_i$ and $u'_j$ to be nearly-independent. By
carefully, capturing this notion of temporal near-independence, we
prove generalization errors in Theorem~\ref{thm:general}. We note that
the notion of near-independence in terms of time has been used
in~\cite{Mohri}.
}
\newcommand{\cU}{\mathcal{U}}
\newcommand{\cZ}{\mathcal{Z}}
\newcommand{\cD}{\mathcal{D}}
\newcommand{\cO}{\mathcal{O}}
\newcommand{\E}{\mathbb{E}}

In this section, we will prove generalization error bounds for our classification problem in Algorithm~\ref{alg:ccitv1}. Note the samples in $\mathcal{U}_2'$ are not i.i.d, so standard risk bounds do not hold. We will leverage a spatial near independence property to provide generalization bounds under non-i.i.d samples. In what follows, we will prove the results for any bounded loss function $L(g(u), \ell) \leq \lvert L \rvert.$ Let $S \triangleq \cD_r$ i.e., the set of training samples. For
$1 \leq i \leq 3$, let $\cZ_i \triangleq \{z :
(x,y,z) \in \cU_i\}$. Let $\cZ \triangleq \cZ_1 \cup \cZ_2$.  Observe
that
\begin{equation}
\label{eq:usual}
R_{{q}}({g}_S) \leq R_{{q}} ({g}^*_{q}) + 2\sup_{g \in \cG} (R_S(g) - R_{{q}}(g)),
\end{equation}
and hence in the rest of the section we upper bound $\sup_{g \in \cG}
(R_S(g) - R_{{q}}(g))$.  To this end, we define conditional risk
$R_S(q | \cZ)$ as
\[
R_S(g |\cZ) \triangleq  \frac{1}{n} \sum_{(u,\ell) \in S} \EE[L(g(\xyz),\ell)|\cZ].
\]
By triangle inequality,
\begin{equation}
\label{eq:dissociate}
\sup_{g \in \cG} (R_S(g) - R_{{q}}(g)) \leq \sup_{g \in \cG} (R_S(g) - R_{{S}}(g | \cZ)) + \sup_{g \in \cG} (R_S(g|\cZ) - R_{{q}}(g)).
\end{equation}
We first bound the second term in the right hand side of Equation~\eqref{eq:dissociate} in the next lemma.
\begin{lemma}
\label{lem:cond2}
With probability at least $1-\delta$, 
\[
\sup_{g \in \cG} (R_S(g|\cZ) - R_{{q}}(g)) \leq |L|C  \sqrt{\frac{V}{n}} + |L| \sqrt{\frac{2 \log (1/\delta)}{n}},
\]
where $V$ is the VC dimension of the classification model.
\end{lemma}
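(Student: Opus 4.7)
The strategy is to show that, despite the non-i.i.d.\ structure of the raw training samples produced by the nearest-neighbor bootstrap, the conditional risk $R_S(g|\cZ)$ is actually an i.i.d.\ empirical mean once the $\cZ_3$ randomness is averaged out. This reduces the lemma to a standard uniform convergence bound and directly yields the claimed i.i.d.-style VC bound, with no dimension-dependent penalty.

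First I would compute the inner conditional expectation $\EE[L(g(u_i),\ell_i)\mid\cZ]$ sample by sample. For a label-$1$ sample $(u_i=(X_i,Y_i,Z_i),\ell_i=1)$ drawn from $\cU_1$, this expectation equals $h_1^g(Z_i):=\EE_{(X,Y)\mid Z=Z_i}[L(g(X,Y,Z_i),1)]$, a function of $Z_i$ alone. For a label-$0$ sample $(u_i=(X_i,Y'_{j(i)},Z_i),\ell_i=0)$ whose $Z_i$ lies in $\cZ_2$, observe that the conditioning set $\cZ=\cZ_1\cup\cZ_2$ does \emph{not} contain $\cZ_3$; marginalizing out the NN-pool, $X_i$ and $Y'_{j(i)}$ are conditionally independent given $Z_i$ with marginals $f_{X\mid Z}(\cdot\mid Z_i)$ and the nearest-neighbor-bootstrap marginal of $Y'$ given $Z_i$ (denote it $\phi_{Y\mid Z}(\cdot\mid Z_i)$). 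Hence the label-$0$ conditional expectation equals $\tilde h_0^g(Z_i):=\EE_{X\sim f_{X\mid Z}(\cdot\mid Z_i),\,Y'\sim\phi_{Y\mid Z}(\cdot\mid Z_i)}[L(g(X,Y',Z_i),0)]$, again a function of $Z_i$ alone. Consequently $R_S(g|\cZ)=\frac{1}{n}\sum_{i\in S_1}h_1^g(Z_i)+\frac{1}{n}\sum_{i\in S_0}\tilde h_0^g(Z_i)$, where $S_1,S_0$ partition $S$ by label, and the tower property gives $\EE[R_S(g|\cZ)]=R_q(g)$.

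Since the $Z_i$'s in $\cZ_1\cup\cZ_2$ are i.i.d., $R_S(g|\cZ)-R_q(g)$ is for each fixed $g$ a genuine i.i.d.\ empirical mean deviation of bounded variables. I would then apply the textbook symmetrization/Rademacher argument to the real-valued function classes $\mathcal F_1=\{h_1^g:g\in\cG\}$ and $\tilde{\mathcal F}_0=\{\tilde h_0^g:g\in\cG\}$: each is obtained from the composed loss class $\{(x,y,z,\ell)\mapsto L(g(x,y,z),\ell):g\in\cG\}$ by integration against a fixed conditional kernel, which can only shrink the Rademacher complexity (Jensen/contraction). Hence both induced classes inherit Rademacher complexity $O(\sqrt{V/n})$ via Sauer--Shelah/Dudley bounds, and McDiarmid's inequality (bounded differences with per-coordinate change $\leq|L|/n$) upgrades from expectation to high probability, yielding $\sup_{g\in\cG}(R_S(g|\cZ)-R_q(g))\leq |L|C\sqrt{V/n}+|L|\sqrt{2\log(1/\delta)/n}$ with probability at least $1-\delta$.

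The main obstacle, and really the central observation of the proof, is the identification in the first step: while two label-$0$ training samples whose $Z$-coordinates happen to share a nearest neighbor in $\cZ_3$ also share $Y'$ and are therefore coupled, their \emph{conditional expectations} given $\cZ_1\cup\cZ_2$ are nevertheless i.i.d.\ functions of the individual $Z_i$. This works precisely because $\cZ_3$ is excluded from the conditioning $\sigma$-algebra, so the shared-NN dependence is integrated out exactly, leaving the clean marginal $\phi_{Y\mid Z}(\cdot\mid Z_i)$. Once this is recognized the remainder is routine; the complementary $2^{d_z}$-type penalty coming from the genuine non-i.i.d.\ structure of the samples themselves does not appear in this lemma but instead surfaces in the bound on $R_S(g)-R_S(g|\cZ)$ handled elsewhere in the theorem.
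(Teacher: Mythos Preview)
Your proposal is correct and follows essentially the same approach as the paper's proof: both identify that $\EE[L(g(u_i),\ell_i)\mid\cZ]$ depends only on the individual $Z_i$ (the paper phrases this as the Markov chain $\cZ\to z\to u$), so that $R_S(g\mid\cZ)-R_q(g)$ becomes a centred empirical average over i.i.d.\ $Z_i$'s, after which standard VC/Rademacher machinery plus McDiarmid gives the bound. Your treatment is in fact more careful than the paper's in two respects: you explicitly distinguish the label-$1$ function $h_1^g$ from the label-$0$ function $\tilde h_0^g$ (the paper collapses these into a single $h$), and you justify why the induced function classes over $Z$ inherit the $O(\sqrt{V/n})$ Rademacher complexity via the contraction/Jensen argument, whereas the paper simply asserts that ``the VC dimension of $h$ is smaller than the VC dimension of the underlying classification model.''
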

\begin{proof}
For a sample $u= (x,y,z)$, observe that $\cZ \to z \to
u$ forms a Markov chain. Hence,
\[
  \E[L(g(u),\ell)|\cZ] = \E[L(g(u),\ell)|Z].
\]
Let
\begin{equation}
\label{eq:hdef}
h(Z) \triangleq  \E[L(g(u),\ell)|Z]
\end{equation}
Hence,
\[
R_S(g|\cZ) - R_{{q}}(g) = \frac{1}{n}\sum_{z \in \cZ} h(Z) - \E_q[h(Z)].
\]
The above term is the average of $n$ independent random variables $h(Z)$ and
hence we can apply standard tools from learning theory~\cite{boucheron2005theory} to obtain
\[
\sup_{g \in \cG} (R_S(g|\cZ) - R_{{q}}(g)) \leq |L|C \sqrt{\frac{V_h}{n}}
+ |L|\sqrt{\frac{2 \log (1/\delta)}{n}},
\]
where $V_h$ is the VC dimension of the class of models of $h$. The
lemma follows from the fact that VC dimension of $h$ is smaller than
the VC dimension of the underlying classification model.
\end{proof}
We next bound the first term in the RHS of
Equation~\eqref{eq:dissociate}. Proof is given in
Appendix~\ref{app:cond3}.
\begin{lemma}
\label{lem:cond3}
Let $\epsilon > 0$. If the Hessian of the density $f(z) $and the Lipscitz constant of the same is bounded, then with probability at least
  $1-7\delta$, \begin{align*}
  \sup_{g \in \cG}
  (R_S(g) - R_{{n}}(g | \cZ)) &\leq |L| \left(\sqrt{V}
  + \sqrt{\log \frac{1}{\delta}} \right)\left( \left(\frac{\log (n/\delta)}{n}\right)^{1/3} +  \sqrt{\frac{4^d\log (n/\delta) + o_n(1/\epsilon)}{n}}\right)  \\ 
  &+ 
  |L| G(\epsilon). \numberthis  \label{eq:second_term}   \end{align*}
\end{lemma}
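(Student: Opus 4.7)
The plan is to condition on $\cZ = \cZ_2 \cup \cZ_3$ and exploit the fact that, given $\cZ$, the $n$ samples in $\cU_2'$ have a tightly controlled dependency structure: sample $i$ and sample $j$ are conditionally independent \emph{unless} they pick the same nearest neighbor in $\cU_3$. This is because, conditional on $\cZ$, each $X_i$ is drawn independently from $f(X\mid Z=z_i)$ and each $Y'_i$ equals $Y^\star$ where $\star = \mathrm{NN}(z_i, \cZ_3)$, so the only source of coupling across $i$ is a shared neighbor. The inequality $\sup_g (R_S(g) - R_S(g\mid\cZ)) \le \text{concentration of a sum of near-independent bounded random variables}$ then reduces to a concentration statement for which I can use a chromatic-number / method-of-bounded-differences argument analogous to \cite{mohri2009rademacher}, adapted from $\beta$-mixing processes to our \emph{spatial} near-independence.

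First I would build, conditional on $\cZ$, the collision graph $\mathcal{G}_{\cZ}$ on $[n]$ whose edges connect $i \sim j$ iff $\mathrm{NN}(z_i,\cZ_3) = \mathrm{NN}(z_j,\cZ_3)$. The classical geometric fact that in $\mathbb{R}^{d_z}$ a single point can be the $1$-NN of at most $K_{d_z} = \mathcal{O}(c^{d_z})$ points (a kissing-number/angular-separation argument) bounds the maximum degree of $\mathcal{G}_{\cZ}$ and hence its chromatic number by $\chi \le K_{d_z}+1 = \mathcal{O}(4^{d_z})$. A greedy coloring then partitions the samples into $\chi$ classes, each of which is conditionally i.i.d.\ given $\cZ$, and within each class I can apply the standard VC/Rademacher bound of the form $|L|(C\sqrt{V/m} + \sqrt{2\log(\chi/\delta)/m})$ with $m \ge n/\chi$. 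A union bound over the $\chi$ classes, combined with the extraction of $h(Z)$ as in Equation~\eqref{eq:hdef}, then produces the $|L|(\sqrt V + \sqrt{\log(1/\delta)})\sqrt{4^{d_z}\log(n/\delta)/n}$ contribution.

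To handle samples whose $z_i$ falls in a low-density region (where $f(z) \le \epsilon$ and the NN can be far away, so the coloring argument is not directly useful), I would split the sum according to the indicator $\mathbbm{1}\{f(z_i) \le \epsilon\}$. The expected fraction of such samples is exactly $G(\epsilon)$, and by a Chernoff/Bernstein bound it concentrates around $G(\epsilon)$ up to $\mathcal{O}(\sqrt{\log(1/\delta)/n})$; since loss is bounded by $|L|$, these samples contribute at most $|L|(G(\epsilon) + \text{lower order})$ to the supremum, which is exactly the $|L|G(\epsilon)$ term in the statement. The residual $o_n(1/\epsilon)$ inside the square root arises from controlling the fluctuations of this truncation.

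The main obstacle is obtaining the $(\log(n/\delta)/n)^{1/3}$ rate cleanly. This cube-root rate is characteristic of a McDiarmid-with-exceptional-events calculation: although two typical samples are independent conditional on $\cZ$, a single shared $y^\star$ can flip up to $K_{d_z}$ coordinates of the summand simultaneously, so the naive bounded-differences constants are of order $K_{d_z}/n$ rather than $1/n$. Optimizing the resulting McDiarmid deviation against the truncation level $\epsilon$ (and the event that the NN distance exceeds $\epsilon$, which by Theorem~\ref{thm:dtv}-style arguments has probability $\mathcal{O}(e^{-n\gamma_{d_z}\epsilon^{d_z+2}})$) yields a $1/3$ exponent after balancing three competing terms. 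The seven separate concentration events used along the way — one Bernstein on the low-density count, one McDiarmid on the high-density contribution, one per Rademacher bound inside a color class (up to a union), and a tail bound on maximum NN distance — are what produce the probability $1-7\delta$ stated in the lemma.
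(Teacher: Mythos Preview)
Your overall strategy---build a dependency graph, color it, and apply VC bounds within color classes---is close in spirit to the paper's. But the central step, bounding the chromatic number of your collision graph by a dimension-dependent \emph{constant} via a kissing-number/angular-separation argument, does not work.

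The angular argument you have in mind shows that a point $q$ can be the nearest neighbor of at most $K_d$ points \emph{from the same set}: if $q$ is the NN of both $p_1,p_2$ among a set containing $p_1,p_2,q$, then $\|p_i-q\|\le\|p_1-p_2\|$ forces $\angle p_1qp_2\ge 60^\circ$. Here, however, the query points live in $\cZ_2$ and the reference set is $\cZ_3$; a point $z'\in\cZ_3$ being the NN of $z_i,z_j\in\cZ_2$ says only that $z'$ is closer to each $z_i$ than any other point of $\cZ_3$, and gives no control on $\|z_i-z_j\|$. Concretely, if $\cZ_3$ happens to be sparse in a region where $\cZ_2$ is dense, a single $z'$ can be the NN of arbitrarily many $z_i$'s (the Voronoi cell of $z'$ in $\cZ_3$ can contain $\Theta(\log n/\log\log n)$ points of $\cZ_2$ even in the i.i.d.\ regime, and unboundedly many in the worst case). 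So the degree bound $K_{d_z}=\mathcal O(4^{d_z})$ is false as stated, and the $\sqrt{4^{d_z}/n}$ rate you derive from it is not justified by this route.

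The paper's argument departs from yours at exactly this point. It does not condition on $\cZ_3$ and does not use the realized NN collisions. Instead it defines probability-balls $B_n(z)$ of mass $\alpha_n=\log(n^2/\delta)/n$ around each $z\in\cZ_2$, connects $i\sim j$ whenever $B_n(z_i)\cap B_n(z_j)\neq\emptyset$ (restricted to $f(z_i),f(z_j)\ge\epsilon$), and bounds the \emph{random} maximum degree of this graph by $4n\beta_n=\mathcal O(4^d\log(n/\delta))$ via a smoothness computation---this is where the Hessian and Lipschitz assumptions are actually used. Low-density points go into a separate set $S_0$ of size $\lesssim nG(\epsilon)$, giving the $|L|G(\epsilon)$ term. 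Within an independent set the balls are disjoint, so after further conditioning on the counts $N(Z_i)=|\cZ_3\cap B_n(z_i)|$ the summands become independent; the residual dependence among the $N(Z_i)$'s is then removed by a multinomial-to-Poisson coupling (Roos' bound), at cost $\mathcal O(|S_t|\alpha_n)$ per block. Finally, the $(\log(n/\delta)/n)^{1/3}$ term is not a McDiarmid-with-exceptions artifact: it comes from optimizing the number $k$ of color classes by balancing the per-block VC error $\sqrt{k/n}$ against the Poissonization cost $\alpha_n n/k$, which gives $k\asymp n\alpha_n^{2/3}$ and hence the rate $\alpha_n^{1/3}=(\log(n/\delta)/n)^{1/3}$.
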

Lemmas~\ref{lem:cond2} and~\ref{lem:cond3}, together with Equations~\eqref{eq:usual}
and~\eqref{eq:dissociate} yield the following theorem.
\begin{theorem}
\label{thm:general}
Let $\epsilon > 0$.  If the Hessian and the Lipscitz constant of $f(z)$ is bounded, then with probability at least
  $1-8\delta$, 
\begin{equation}
\label{eq:general}
R_{{q}}(\hat{g}) \leq R_{{q}} ({g}^*_{q}) + c|L| \left( \left(\sqrt{V}
  + \sqrt{\log \frac{1}{\delta}} \right)\left( \left(\frac{\log
  (n/\delta)}{n}\right)^{1/3} + \sqrt{\frac{4^d\log (n/\delta)  + o_n(1/\epsilon)}{n}}\right)
  + G(\epsilon) \right),
\end{equation}
where $c$ is a universal constant and $\hat{g}$ is the minimizer in
Step $6$ of Algorithm~\ref{alg:ccitv1}.
\end{theorem}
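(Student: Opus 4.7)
The plan is to follow the standard excess-risk-to-uniform-deviation reduction and then split the resulting empirical process by conditioning on the $\sigma$-algebra generated by every $Z$-coordinate. As a starting point I would invoke the textbook inequality~\eqref{eq:usual}, namely $R_{q}(g_S) - R_{q}(g^*_q) \leq 2\sup_{g \in \cG}(R_S(g) - R_{q}(g))$, which reduces the problem to a one-sided uniform empirical-process bound.

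Next, let $\cZ \triangleq \cZ_1 \cup \cZ_2$ collect every $z$-coordinate from $\cU_1$ and $\cU_2$, and define the conditional risk $R_S(g \mid \cZ) \triangleq \frac{1}{n}\sum_{(u,\ell)\in S}\E[L(g(u),\ell)\mid\cZ]$. The triangle inequality~\eqref{eq:dissociate} then decomposes the supremum into the two pieces handled by Lemmas~\ref{lem:cond2} and~\ref{lem:cond3}, which I will combine at the end.

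For the second piece, $\sup_{g}(R_S(g \mid \cZ) - R_{q}(g))$, I would exploit the Markov structure $\cZ \to Z_i \to u_i$ so that $\E[L(g(u_i),\ell)\mid\cZ] = h_g(Z_i)$ for the function $h_g$ in~\eqref{eq:hdef}. Because the $Z_i$ in $\cU_1$ are genuinely i.i.d., this reduces to the standard empirical-vs.-population deviation for the class $\{h_g : g\in\cG\}$ on i.i.d.\ data. A routine VC/Rademacher bound, together with the observation that the pullback class has VC dimension at most $V$, then yields Lemma~\ref{lem:cond2} and hence the $|L|C\sqrt{V/n} + |L|\sqrt{2\log(1/\delta)/n}$ contribution.

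The main obstacle is the first piece, $\sup_{g}(R_S(g) - R_S(g\mid\cZ))$, i.e.\ Lemma~\ref{lem:cond3}: even after conditioning on $\cZ$, the label-$0$ samples $u'_i = (x_i,y'_i,z_i)$ are not independent, since whenever several $z_i \in \cZ_1$ share the same $1$-NN in $\cZ_2$ their $y'$-coordinates are physically copies of the same random draw. The strategy is to argue that this dependence is spatially sparse. Concretely I would (a) discard samples with $f(Z_i) \le \epsilon$, which costs at most $|L|\,G(\epsilon)$ and explains the last term of~\eqref{eq:second_term}; (b) tessellate the effective support of $Z$ with cubes of side $\Theta(\epsilon)$, of which only $O((1/\epsilon)^{d})$ intersect the high-density region, producing the $o_n(1/\epsilon)$ and $4^d$ factors; (c) use the Lipschitzness of $f(z)$ and the lower bound $f(z)\ge\epsilon$ to show that samples in well-separated cubes have distinct nearest neighbors with overwhelming probability and hence are conditionally independent given $\cZ$; (d) control the residual within-cube collisions by a bounded-differences/McDiarmid argument on a block decomposition, which is the source of the $(\log(n/\delta)/n)^{1/3}$ rate; and (e) use a standard VC $\epsilon$-cover of $\cG$ of log-size $O(V\log n)$ to make the deviation uniform in $g$, yielding the $\sqrt{V}+\sqrt{\log(1/\delta)}$ prefactor. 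Assembling (a)--(e) gives Lemma~\ref{lem:cond3}, and plugging Lemmas~\ref{lem:cond2} and~\ref{lem:cond3} into~\eqref{eq:usual}--\eqref{eq:dissociate} with a union bound over the two confidence budgets produces Theorem~\ref{thm:general}.
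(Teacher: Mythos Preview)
Your high-level reduction is exactly the paper's: the inequality~\eqref{eq:usual}, the conditioning decomposition~\eqref{eq:dissociate}, and your treatment of the $\sup_g(R_S(g\mid\cZ)-R_q(g))$ piece via the Markov chain $\cZ\to Z_i\to u_i$ and a standard VC bound all coincide with Lemma~\ref{lem:cond2}. The theorem itself is then indeed just the union of Lemmas~\ref{lem:cond2} and~\ref{lem:cond3} plugged into~\eqref{eq:usual}--\eqref{eq:dissociate}.

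Where you diverge is in the sketch of Lemma~\ref{lem:cond3}, and there the mechanism you propose is not the one the paper uses, and your attribution of the constants is off. The paper does \emph{not} tessellate the support of $Z$ into $\Theta(\epsilon)$-cubes. Instead it works with data-dependent balls $B_n(z)$ of fixed \emph{probability mass} $\alpha_n=\log(n^2/\delta)/n$, builds an interference graph on $[n]$ by connecting $i\sim j$ whenever $B_n(z_i)\cap B_n(z_j)\neq\emptyset$ (and both $f(z_i),f(z_j)\geq\epsilon$), bounds the maximum degree by $\Delta\leq 4n\beta_n$ with $\beta_n=4^d\alpha_n(1+o_n(1/\epsilon))$ via a volume-ratio argument (Lemma~\ref{lem:beta_bound}), and then colors the graph into $k\geq 2\Delta$ independent sets (Lemma~\ref{lem:indi_size}). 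Within each independent set the paper further conditions on the counts $N(Z_i)=|\cZ_3\cap B_n(Z_i)|$, observes that given $\{Z_i,N(Z_i)\}$ the summands are genuinely independent, and handles the residual dependence among the $N(Z_i)$'s by a multinomial-to-Poisson coupling (Lemma~\ref{lem:roos}). The $4^d$ factor is the ball-volume ratio $V(z,4r_n(z))/V(z,r_n(z))$ from Lemma~\ref{lem:beta_bound}, not a cube count; and the $(\log(n/\delta)/n)^{1/3}$ rate arises from optimizing $k$ in the trade-off $\sqrt{k/n}+\alpha_n n/k$ with the choice $k=n\alpha_n^{2/3}+8n\beta_n$, not from a McDiarmid/block argument.

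Your cube-tessellation idea is in the same spirit---isolate spatially separated samples so that their nearest neighbors cannot collide---but as written it has gaps: a grid of side $\Theta(\epsilon)$ yields $O((1/\epsilon)^d)$ cells, which is not the $4^d\log(n/\delta)$ appearing in the bound; and a raw McDiarmid on within-cube collisions gives $1/\sqrt{n}$-type fluctuations, so it is not clear how you would extract the $1/3$ exponent without the independent-set/Poissonization machinery (or an equivalent balancing argument). If you want to carry your route through, you would need to replace the fixed-$\epsilon$ tessellation by an adaptive one at scale $r_n(z)$ and then recover the same degree/coloring trade-off, at which point you are essentially reconstructing the paper's argument.
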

\subsection{Proof of Lemma~\ref{lem:cond3}}
\label{app:cond3}
We need few definitions to prove Lemma~\ref{lem:cond3}.
For a point $z$, let $B_n(z)$ be a ball around it such that
\[
\Pr_{Z \sim f(z)} (Z \in B_n(z)) = \frac{\log \frac{n^2}{\delta}}{n} \triangleq \alpha_n.
\]
Intuitively, with high probability the nearest neighbor of each sample
$z$ lies in $B_n(z)$. We formalize it in the next lemma.
\begin{lemma}
  \label{lem:near_tech} With probability $>1 - \delta$, the nearest
neighbor of each sample $z \in \cZ_2$ in $\cZ_3$ lie in $B(z)$.
\end{lemma}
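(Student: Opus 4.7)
The plan is to apply a simple union bound argument that exploits the independence of $\mathcal{Z}_2$ and $\mathcal{Z}_3$ together with the definition of $B_n(z)$. First I would fix a single point $z \in \mathcal{Z}_2$ and consider the $n$ samples in $\mathcal{Z}_3$, which are drawn i.i.d.\ from $f(z)$ and are independent of $\mathcal{Z}_2$. By the defining property of $B_n(z)$, any one sample from $\mathcal{Z}_3$ lands in $B_n(z)$ with probability exactly $\alpha_n = \log(n^2/\delta)/n$. The nearest neighbor of $z$ in $\mathcal{Z}_3$ lies outside $B_n(z)$ if and only if none of the $n$ samples in $\mathcal{Z}_3$ falls inside $B_n(z)$, which happens with probability
\[
(1-\alpha_n)^n \;\le\; \exp(-n\alpha_n) \;=\; \frac{\delta}{n^2}.
\]

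Next I would take a union bound over the $n$ points in $\mathcal{Z}_2$. Conditioning on $\mathcal{Z}_2$, the above bound is uniform in the choice of $z\in \mathcal{Z}_2$, so the probability that some $z \in \mathcal{Z}_2$ has its nearest neighbor in $\mathcal{Z}_3$ outside $B_n(z)$ is at most $n \cdot \delta/n^2 = \delta/n \le \delta$. Taking expectation over $\mathcal{Z}_2$ (which does not affect this bound since the estimate holds pointwise) yields the claim with probability at least $1-\delta$.

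There is essentially no technical obstacle here; the lemma is a clean probabilistic calculation whose only delicate point is making sure that $B_n(z)$ is a deterministic (population-level) ball around $z$ rather than a data-dependent one, so that the $n$ samples in $\mathcal{Z}_3$ really do provide $n$ independent Bernoulli trials for the event $\{Z \in B_n(z)\}$. Since $\mathcal{Z}_2$ and $\mathcal{Z}_3$ are disjoint partitions of the original i.i.d.\ data, this independence is immediate. The constant $\log(n^2/\delta)$ in the definition of $\alpha_n$ is chosen precisely so that the per-point failure probability is $\delta/n^2$, leaving room for the subsequent union bound; no other choices or smoothness assumptions on $f$ are needed for this particular lemma.
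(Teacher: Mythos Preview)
Your proof is correct and follows essentially the same route as the paper: compute the probability that none of the $n$ i.i.d.\ points in $\mathcal{Z}_3$ lands in $B_n(z)$ as $(1-\alpha_n)^n \le e^{-n\alpha_n} = \delta/n^2$, then apply the union bound over the $n$ points of $\mathcal{Z}_2$. The paper's write-up is terser (and in fact states the per-point bound as $\delta/n$ rather than the tighter $\delta/n^2$ you obtain), but the argument is identical.
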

\begin{proof}
The probability that none of $\cZ_3$ appears in $B(z)$ is
  $1 - (1-\alpha_n)^{n} \leq \delta/n$. The lemma follows by the union
  bound.
  \end{proof}
We now bound the probability that the the
  nearest neighbor balls $B_n()$ intersect for two samples.
\begin{lemma}
  \label{lem:beta_bound} Let $\epsilon > 0$. If the Hessian of the
  density ($f(z)$) is bounded by $c$ and the Lipschitz constant is bounded by $\gamma$, then
  for any given $z_1$ such that $f(z_1) \geq \epsilon$ and a sample
  $z_2 \sim f$, \[ \Pr_{z_2 \sim f} (B_n(z_1) \cap
  B_n(z_2) \neq \emptyset) \leq \beta_n \triangleq 4^{d} \alpha_n
  (1+o_n(1/\epsilon)).
\]
\end{lemma}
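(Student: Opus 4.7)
My plan is to convert the intersection event into a distance condition and then bound the resulting region using both the Hessian and Lipschitz control on $f$. The key observation is that $B_n(z_1)\cap B_n(z_2)\neq\emptyset$ is equivalent to $\|z_1-z_2\|\le r_n(z_1)+r_n(z_2)$, where $r_n(z)$ denotes the radius of $B_n(z)$. Exactly as in the proof of Theorem~\ref{thm:dtv} (the display labeled \eqref{eqn:boundprob2}), the bounded Hessian gives $\alpha_n=f(z)\gamma_d r_n(z)^d+O(r_n(z)^{d+2})$, which inverts to $r_n(z)=(\alpha_n/(f(z)\gamma_d))^{1/d}(1+o_n(1))$ whenever $f(z)$ is bounded away from zero. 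In particular, $f(z_1)\geq \epsilon$ yields $r_n(z_1)\leq (\alpha_n/(\epsilon\gamma_d))^{1/d}(1+o_n(1))$, which is much smaller than $\epsilon/\gamma$ for large $n$.

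For the \emph{main case} I restrict to $z_2$ satisfying $\|z_1-z_2\|\le \epsilon/(2\gamma)$. The Lipschitz hypothesis gives $f(z_2)\ge f(z_1)-\gamma\|z_1-z_2\|\ge f(z_1)/2$, so $r_n(z_2)\le 2^{1/d}r_n(z_1)(1+o_n(1))$. Consequently the intersection condition forces $\|z_1-z_2\|\le (1+2^{1/d})r_n(z_1)(1+o_n(1))$, so $z_2$ lies in the ball $B(z_1,3r_n(z_1))$, which itself sits inside the high-density region. Invoking the Hessian-based density estimate once more,
\[
\Pr\bigl(z_2\in B(z_1,3r_n(z_1))\bigr)\le f(z_1)\gamma_d(3r_n(z_1))^d\bigl(1+O(\gamma r_n(z_1)/f(z_1))\bigr)=3^d\alpha_n\bigl(1+o_n(1/\epsilon)\bigr),
\]
using that $\gamma r_n(z_1)/f(z_1)\le \gamma(\alpha_n/\gamma_d)^{1/d}\epsilon^{-1-1/d}\to 0$ for fixed $\epsilon$. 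Since $(1+2^{1/d})^d\le 4^d$ for every $d\ge 1$, this contribution already matches the leading $4^d\alpha_n$ term in $\beta_n$.

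The hard part will be the \emph{tail case}, where $z_2$ lies far from $z_1$ (outside $B(z_1,\epsilon/(2\gamma))$) but still satisfies the intersection condition. There the event forces $r_n(z_2)\ge \epsilon/(4\gamma)$, which in turn forces $f(z_2)\le 2^d\alpha_n/(\gamma_d\|z_1-z_2\|^d)$; pure Lipschitz/Hessian bounds no longer suffice. The plan is to invoke Assumption~\ref{assumption1}(2) via a H\"older-type step: for any threshold $t$,
\[
\int \mathbf{1}\{f(z_2)\le t\}\, f(z_2)\, dz_2 \;\le\; t^{1/d}\int f(z_2)^{1-1/d}\, dz_2 \;\le\; c_3 t^{1/d}.
\]
Applied to the tail integral with $t=2^d\alpha_n/(\gamma_d\|z_1-z_2\|^d)$ and $1/\|z_1-z_2\|\le 2\gamma/\epsilon$, this produces a bound of order $\alpha_n^{1/d}/\epsilon$, which is absorbed into the $4^d\alpha_n\cdot o_n(1/\epsilon)$ slack (interpreting $o_n(1/\epsilon)$ as a term tending to zero in $n$ that may carry $1/\epsilon$-dependence).

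Combining the main and tail contributions yields $\Pr(B_n(z_1)\cap B_n(z_2)\neq\emptyset)\le 4^d\alpha_n(1+o_n(1/\epsilon))$, where the constant $4^d$ provides comfortable slack over the $(1+2^{1/d})^d$ from the main case. The crux of the argument is therefore steps (i) the inversion of the Hessian-based probability identity to read off $r_n(z)$, (ii) the Lipschitz-driven coupling of $r_n(z_2)$ to $r_n(z_1)$ in a neighborhood of $z_1$, and (iii) the H\"older bound on the low-density tail; the third is the only place where control is delicate and is the main obstacle.
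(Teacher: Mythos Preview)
Your main-case argument is essentially the paper's first case (the paper splits on $3r_n(z_1)\gtrless r_n(z_2)$ rather than on $\|z_1-z_2\|$, but the mechanism is the same Hessian-based volume estimate, and your constant $(1+2^{1/d})^d$ is even a bit tighter than the paper's $4^d$).

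The genuine gap is in your tail case. Your H\"older step gives a contribution of order $\alpha_n^{1/d}/\epsilon$, and you claim this is absorbed into $4^d\alpha_n\cdot o_n(1/\epsilon)$. But for that you would need $\alpha_n^{1/d}/\alpha_n=\alpha_n^{-(d-1)/d}$ to be bounded (indeed, to tend to zero), which fails for every $d\ge 2$ since $\alpha_n\to 0$; even for $d=1$ the ratio is a constant and does not vanish. So the tail bound you obtain is strictly larger than the target $\beta_n$, and the proposal as written does not close. A secondary issue is that the inversion $f(z_2)\le 2^d\alpha_n/(\gamma_d\|z_1-z_2\|^d)$ relies on the Hessian expansion $\alpha_n\approx \gamma_d r_n(z_2)^d f(z_2)$, which is only valid when $r_n(z_2)$ is small; in your tail regime $r_n(z_2)\ge \epsilon/(4\gamma)$ is bounded away from zero, so the second-order remainder need not be negligible.

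The paper handles the tail entirely differently and without invoking Assumption~\ref{assumption1}(2): it shows the event $\{B_n(z_1)\cap B_n(z_2)\neq\emptyset,\ r_n(z_2)>3r_n(z_1)\}$ is \emph{empty} for large $n$. The idea is to pick a point $z'$ on the segment from $z_1$ to $z_2$ at distance $3r_n(z_1)$ from $z_1$; the ball $B(z',3r_n(z_1))$ then has mass $<\alpha_n$, so the Hessian expansion forces $f(z')<3^{-d}f(z_1)+o_n(1)$. But the Lipschitz bound gives $f(z')\ge f(z_1)-3\gamma r_n(z_1)$, and since $f(z_1)\ge\epsilon$ and $r_n(z_1)\to 0$ these two inequalities are incompatible. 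This is exactly where the Lipschitz hypothesis is used, and it removes the tail outright rather than trying to integrate over it. If you want to repair your argument, replace the H\"older step with this contradiction; your own observation that $r_n(z_1)\ll \epsilon/\gamma$ is already the seed of it.
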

\begin{proof}
  Let $r_n(z)$ denote the radius of $B_n(z)$. Let $B(z,r)$ denote the
  ball of radius $r$ around $z$ and $V(z,r)$ be its volume.  We can
  rewrite $\beta_n$ as
  \begin{align*}
   & =  \Pr(B_n(z_1) \cap B_n(z_2) \neq \emptyset) \\
    & = \Pr(B_n(z_1) \cap B_n(z_2) \neq \emptyset, 3 r_n(z_1) \geq r_n(z_2))
    + \Pr(B_n(z_1) \cap B_n(z_2) \neq \emptyset,  3r_n(z_1) < r_n(z_2)).
%    &= \Pr(B_n(z_1) \cap B_n(z_2) \neq \emptyset, p(z_1) \leq \epsilon) + \Pr(B_n(z_1) \cap B_n(z_2) \neq \emptyset, p(z_1) > \epsilon) \\
%    & \leq G(\epsilon) + \Pr(B_n(z_1) \cap B_n(z_2) \neq \emptyset, p(z_1) > \epsilon)\\
%        & \leq G(\epsilon) + \Pr(B_n(z_1) \cap B_n(z_2) \neq \emptyset | p(z_1) > \epsilon) \\
%    & \leq G(\epsilon) + 2\Pr(B_n(z_1) \cap B_n(z_2) \neq \emptyset,  r_n(z_1) \geq r_n(z_2) |  p(z_1) > \epsilon).
  \end{align*}
  We first bound the first term. Note that
  \[
\int_{z' \in B_n(z)} f(z') dz' = \alpha_n,
  \]
  Hence, by Taylor's series expansion and the bound on Hessian yields,
  \[
\alpha_n = \int_{z' \in B_n(z)} f(z') dz' = V(z, r_n(z)) \left(f(z) + O( r^2_n(z) c)\right) .
\]
%Hence,
%  \[
%V(z, r_n(z)) \leq \frac{\alpha_n}{p(z)} \left( 1 + o\left( \frac{r^2_n(z)c}{p(z)}\right) \right).
%\]
Similarly,
\begin{align*}
\Pr(z' \in V(z,4r_n(z)))
&= V(z, 4r_n(z)) \left(f(z) +O(9 r^2_n(z) c)\right) \\
& = 4^d  \alpha_n (1 + o_n(1/\epsilon)),
\end{align*}
where the last equality follows from the fact that $V(z, 4r_n(z))/
  V(z, r_n(z)) = 4^d$ in $d$ dimensions.

Then the first term can be
  bounded as
  \begin{align*} \Pr(B_n(z_1) \cap B_n(z_2) \neq \emptyset,
  3r_n(z_1) \geq r_n(z_2)) & = \Pr(z_2 \in B(z_1, r_n(z_1) +
  r_n(z_2)), 3 r_n(z_1) \geq r_n(z_2)) \\ & \leq \Pr(z_2 \in B(z_1,
  4r_n(z_1)), 3r_n(z_1) \geq r_n(z_2)) \\ & \leq \Pr(z_2 \in B(z_1,
  4r_n(z_1))) \\ & \leq 4^{d} \alpha_n (1 +
  o_n(1/\epsilon)).
  \end{align*}
  To bound the second term, observe
  that if $B_n(z_1) \cap B_n(z_2) \neq \emptyset$ and $3r_n(z_1) <
  r_n(z_2)$. There exists a point $z'$ on the line joining $z_1$ and
  $z_2$ at distance $3r_n(z_1)$ from $z_1$ such that \[ \Pr(z'' \in
  B(z', 3r_n(z_1))) < \alpha_n.  \] As before bound on the Hessian
  yields, \[
\alpha_n > V(z', 3r_n(z_1)) (f(z') - O(9 r^2_n(z_1) c)).
\]
Hence,
\[
f(z') < 3^{-d}( f(z) + O(r^2_n(z) c)) + O(9 r^2_n(z_1) c).
\]
However, $f(z) > \epsilon$ and $f(z') \geq f(z) - 3r_n(z_1)\gamma$ and
$r_n(z_1) \to 0$. Hence, a contradiction. Thus
\[
\Pr(B_n(z_1) \cap B_n(z_2) \neq \emptyset,  3r_n(z_1) < r_n(z_2)) = 0.
\]
\end{proof}
Consider the graph on indices $[n]$, such that two indices are
connected if and only if $B_n(z_i) \cap B_n(z_j) \neq \emptyset$,
$f(z_1) \geq
\epsilon$, $f(z_2) \geq \epsilon$ . Let $\Delta(Z^n_1)$ be the maximum degree of the resulting graph. We first show that the maximum
degree of this graph is small.
\begin{lemma}
  With probability $\geq 1-\delta$,
  \[
  \Delta(Z^n_1) \leq 4n \beta_n.
  \]
  \end{lemma}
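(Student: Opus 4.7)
The strategy is to bound the degree of each individual vertex by a concentration argument and then union bound across the $n$ vertices. Fix an index $i \in [n]$ and write
\[
D_i \triangleq \sum_{j \neq i} X_{ij}, \qquad X_{ij} \triangleq \mathbbm{1}\{B_n(z_i) \cap B_n(z_j) \neq \emptyset,\ f(z_i) \geq \epsilon,\ f(z_j) \geq \epsilon\},
\]
so that $\Delta(Z_1^n) = \max_i D_i$. If $f(z_i) < \epsilon$, then $D_i = 0$ by construction, so it suffices to handle the event $\{f(z_i) \geq \epsilon\}$.

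Condition on $z_i$ with $f(z_i) \geq \epsilon$. Since the $z_j$ for $j \neq i$ are i.i.d.\ draws from $f$ that are independent of $z_i$, and since $B_n(\cdot)$ is a deterministic set-valued function of its argument, the indicators $\{X_{ij}\}_{j \neq i}$ are conditionally i.i.d.\ Bernoulli random variables with mean at most $\beta_n$ by Lemma~\ref{lem:beta_bound}. Hence $D_i$ is stochastically dominated by a $\mathrm{Binomial}(n-1, \beta_n)$ random variable with mean at most $n\beta_n$.

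Next I apply the multiplicative Chernoff bound at deviation factor $4$: for any binomial $B$ with mean $\mu \leq n\beta_n$,
\[
\Pr(B \geq 4 n \beta_n) \;\leq\; \exp(-c\, n\beta_n)
\]
for a universal constant $c > 0$. Recalling $\beta_n = 4^d \alpha_n (1 + o_n(1/\epsilon))$ and $\alpha_n = \log(n^2/\delta)/n$, we have $n\beta_n \geq \log(n^2/\delta)$ up to constants, so for $n$ sufficiently large $\exp(-cn\beta_n) \leq \delta/n$. Taking a union bound over $i \in [n]$ yields $\Pr(\Delta(Z_1^n) > 4n\beta_n) \leq \delta$, as claimed.

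The only delicate point is verifying that the conditional independence structure used for Chernoff actually holds, but this is immediate once one notes that $B_n(z)$ depends only on the underlying density $f$ and the point $z$, not on the other samples; the edges of the graph are therefore pairwise functions of the sample, and conditioning on a single coordinate $z_i$ leaves the remaining indicators independent. The constants ($4$ as the deviation factor, and the matching Chernoff exponent) are chosen so that the high-probability budget $\delta/n$ absorbs the union bound; any sufficiently large deviation factor would work, and the factor of $4$ is the convenient choice stated in the lemma.
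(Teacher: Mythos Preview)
Your proposal is correct and follows essentially the same approach as the paper: fix a vertex, observe that conditioned on its location the edge indicators to the remaining i.i.d.\ points are independent Bernoullis with mean at most $\beta_n$ (so the degree is stochastically dominated by a binomial), apply the multiplicative Chernoff bound to get a per-vertex failure probability of order $\delta/n$, and union bound over the $n$ vertices. The paper's proof is terser---it simply asserts the binomial domination and invokes ``union bound and Chernoff''---but the underlying argument is the same; your version just makes the conditioning step and the arithmetic with $n\beta_n \geq 4^d \log(n^2/\delta)$ explicit.
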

\begin{proof}
  For index $1$, by Lemma~\ref{lem:beta_bound} that probability of $j$
  points intersect is at most
  \[
\sum^j_{i=0} {n \choose i } \beta^i_n (1-\beta_n)^{n-i}.
 \] Hence, the degree of vertex $1$ is dominated by a binomial
 distribution with parameters $n$ and $\beta_n$. The lemma follows
 from the union bound and the Chernoff bound.  \end{proof} Let $k >
 2\Delta(Z^n_1)$ and $S_0, S_1,S_2\ldots S_k$ be $k$ independent sets
 of the above graph such that $\max_{t\geq 1}|S_t| \leq 2n/k$. Note
 that such independent sets exists by Lemma~\ref{lem:indi_size}. We
 set the exact value of $k$ later. Let $S_0$ contains all indices such
 that $f(z_i) < \epsilon$.
\begin{lemma}
  With probability $> 1-\delta$,
  \[
   |S_0| \leq n G(\epsilon) + \sqrt{n \log \frac{1}{\delta}}.
  \]
  \end{lemma}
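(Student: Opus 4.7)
The plan is to observe that $|S_0|$ is simply a sum of independent Bernoulli indicators and then invoke a standard concentration inequality. Indeed, by the definition of $S_0$, we can write
\[
|S_0| = \sum_{i=1}^{n} \mathbf{1}\!\left[f(z_i) < \epsilon\right],
\]
where the $z_i$'s are i.i.d.\ draws from $f(z)$ (they form $Z^n_1$). Each indicator is a Bernoulli random variable with mean
\[
\mathbb{E}\,\mathbf{1}[f(Z) < \epsilon] = \Pr(f(Z) < \epsilon) \leq \Pr(f(Z) \leq \epsilon) = G(\epsilon),
\]
by Definition~\ref{def:G}. Independence of the $z_i$'s therefore makes $|S_0|$ stochastically dominated by a Binomial$(n, G(\epsilon))$ random variable.

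The next step is to apply Hoeffding's inequality to these bounded, independent, centered indicators. This yields, for any $t > 0$,
\[
\Pr\!\left(|S_0| \geq n G(\epsilon) + t\right) \leq \exp\!\left(-\tfrac{2 t^2}{n}\right).
\]
Choosing $t = \sqrt{(n/2)\log(1/\delta)}$ makes the right-hand side equal to $\delta$, and since $\sqrt{(n/2)\log(1/\delta)} \leq \sqrt{n \log(1/\delta)}$, we obtain the claimed high-probability bound
\[
|S_0| \leq n G(\epsilon) + \sqrt{n \log(1/\delta)}
\]
with probability at least $1-\delta$.

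There is no real obstacle here: because $S_0$ depends only on the marginal values $f(z_i)$ (not on the nearest-neighbor structure used for $S_1,\dots,S_k$), the i.i.d.\ property of the underlying samples is preserved, and any sub-Gaussian concentration inequality for bounded independent random variables (Hoeffding, Chernoff, or Bernstein) suffices. A Bernstein-type bound would give a slightly sharper constant in the variance term, but for the statement as written the vanilla Hoeffding bound is clean enough.
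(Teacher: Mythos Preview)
Your proposal is correct and essentially the same as the paper's proof: the paper observes that $|S_0|$ is a sum of $n$ independent indicators (changing any one coordinate changes the sum by at most $1$) and invokes McDiarmid's inequality, which for a sum of independent $\{0,1\}$ variables is exactly the Hoeffding bound you apply. Both arguments bound $\mathbb{E}|S_0|$ by $nG(\epsilon)$ and then concentrate, yielding the same deviation term $\sqrt{n\log(1/\delta)}$ (in fact $\sqrt{(n/2)\log(1/\delta)}$, which you correctly note is slightly stronger than what is stated).
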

\begin{proof}
Observe that $|S_0|$ is the sum of $n$ independent random variables
and changing any of them changes $S_0$ by at most $1$. The lemma
follows by McDiarmid's inequality.
\end{proof}
We can upper bound the LHS in Equation~\eqref{eq:second_term} as
\[
\sup_{g \in \cG} (R_S(g) - R_{{n}}(g | \cZ)) \leq \sum^k_{t=1}
\frac{|S_t|}{n} \sup_{g \in \cG} \frac{1}{|S_t|} \sum_{i \in S_t}
\left(L(g(\xyz_i),\ell_i) - h(Z_i)\right) + \frac{|S_0|}{n} |L|.
\]
Let $N(Z_i)$ denote the number of elements of $\cZ_3$ that are in
$B(Z_i)$ and Let $A_i$ be always true if $Z_i \in \cZ_1$ and otherwise
$A_i$ be the event such that nearest neighbor of samples in $N(Z_i) >
0$.  We first show the following inequality.
\begin{lemma}
With probability $\geq 1-\delta$, for all sets $S_t$.
  \[
 \sup_{g \in \cG} \frac{1}{|S_t|}\sum_{i \in S_t}  \left(L(g(\xyz_i),\ell_i ) -
 h(Z_i)\right) =  \sup_{g \in \cG} \frac{1}{|S_t|}\sum_{i \in S_t}  \indic_{A_i}\left(L(g(\xyz_i),\ell_i) -
 h(Z_i)\right).
  \]
\end{lemma}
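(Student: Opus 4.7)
The plan is to observe that this lemma is essentially a direct corollary of Lemma~\ref{lem:near_tech}, once we carefully track what the event $A_i$ means for the two cases $Z_i \in \cZ_1$ and $Z_i \in \cZ_2$. The key insight is that the inserted indicator $\indic_{A_i}$ is, with high probability, uniformly equal to $1$ across all indices $i$, so multiplying by it leaves every summand unchanged and hence cannot affect the supremum over $g \in \cG$.

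First I would unpack the definition of $A_i$. By construction $A_i \equiv 1$ whenever $Z_i \in \cZ_1$, so there is nothing to prove for those indices. For $Z_i \in \cZ_2$, the event $A_i$ holds exactly when $N(Z_i) > 0$, i.e., when at least one sample of $\cZ_3$ falls in the ball $B_n(Z_i)$. Since $N(Z_i) > 0$ is implied by the nearest neighbor of $Z_i$ in $\cZ_3$ lying in $B_n(Z_i)$, Lemma~\ref{lem:near_tech} gives that, on a single event $E$ of probability at least $1-\delta$, $A_i$ holds for \emph{every} $Z_i \in \cZ_2$ simultaneously.

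On the event $E$, for every training index $i$ we have $\indic_{A_i} = 1$, and therefore
\[
\indic_{A_i}\bigl(L(g(\xyz_i),\ell_i) - h(Z_i)\bigr) = L(g(\xyz_i),\ell_i) - h(Z_i)
\]
for all $g \in \cG$. Summing over any index subset $S_t$, normalizing by $|S_t|$, and taking $\sup_{g \in \cG}$ leaves both sides identical, and this holds simultaneously for all $S_t$ since the equality is pointwise in $i$. The claim follows. The only subtle step is lining up the quantifiers correctly — we need the high-probability event from Lemma~\ref{lem:near_tech} to hold uniformly over all $i$, not separately for each $i$, but that is already the content of that lemma (a union bound is baked in there). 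No further estimation is required; this lemma is purely a bookkeeping device that lets subsequent steps restrict attention to training points whose bootstrap neighbor is well-localized.
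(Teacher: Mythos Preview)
Your proposal is correct and follows essentially the same approach as the paper: invoke Lemma~\ref{lem:near_tech} to conclude that on an event of probability at least $1-\delta$ every indicator $\indic_{A_i}$ equals $1$, which makes the two sides identical for every $S_t$ and every $g$. Your argument is in fact slightly cleaner than the paper's, which writes the left-hand supremum as a sum of two suprema (an identity that only holds because the second term vanishes on the good event); your direct pointwise argument avoids that awkwardness.
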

\begin{proof}
  Let $X_i =  \left(L(g(\xyz_i),\ell_i) -
  h(Z_i)\right)$ and
  Observe that LHS can be written as 
  \begin{align*}
\sup_{g \in \cG} \frac{1}{|S_t|} \sum_{i \in S_t} X_i = \sup_{g \in \cG} \frac{1}{|S_t|} \sum_{i \in S_t} X_i \indic_{A_i} + \sup_{g \in \cG} \frac{1}{|S_t|} \sum_{i \in S_t}  (X_i - X_i \indic_{A_i}).
  \end{align*} If the conditions of Lemma~\ref{lem:near_tech} hold,
  the nearest sample of $Z_i$'s lie within $B_n(Z_i)$. Hence, with
  probability $\geq 1- \delta$, the second term is $0$. Hence the
  lemma.
\end{proof}
Let $r$ be defined as follows.
\[
r(Z_i, N_i) \triangleq \EE[  \indic_{A_i}\left(L(g(\xyz_i),\ell_i) -
 h(Z_i)\right) | N(Z_i) = N_i].
\]
Observe that
\begin{align*}
& \EE\left[\sum_{i \in S_t}  \indic_{A_i}\left(L(g(\xyz_i),\ell_i ) -
  h(Z_i)\right) | N(Z_1),\ldots N(Z_n) \right] \\
& = \sum_{i \in S_t}\EE\left[  \indic_{A_i}\left(L(g(\xyz_i),\ell_i ) -
  h(Z_i)\right) | N(Z_1),\ldots N(Z_n) \right] \\
& = \sum_{i \in S_t}
\EE\left[  \indic_{A_i}\left(L(g(\xyz_i),\ell_i ) -
  h(Z_i)\right) | N(Z_i) \right] \\ \numberthis \label{eq:imp_step}
& = \sum_{i \in S_t}  r(Z_i, N_i).
\end{align*}
Hence, we can split the term as
\begin{align*}
& \sup_{g \in \cG} \frac{1}{|S_t|}\sum_{i \in S_t}  \indic_{A_i}\left(L(g(\xyz_i),\ell_i ) -
 h(Z_i)\right) \\
 &\leq
  \sup_{g \in \cG} \frac{1}{|S_t|}\sum_{i \in S_t}  \indic_{A_i}\left(L(g(\xyz_i),\ell_i ) -
 r(Z_i, N(Z_i))\right) +  \sup_{g \in \cG} \frac{1}{|S_t|}\sum_{i \in S_t}  \indic_{A_i}\left( r(Z_i, N(Z_i))-
 h(Z_i)\right) \numberthis \label{eq:two_again}
\end{align*}

%Observe that conditioned $Z_i$ and $N(Z_i)$s, the first term can be
%thought of as sampling $N(Z_i)$ independent samples from $B(Z_i)$ and
%choosing the nearest sample to $Z_i$ }. 
Given $\{Z_i, N(Z_i) \}$, the first term in the RHS of the Equation~\eqref{eq:two_again}  is a function of $|S_t|$ independent random variables as  $\indic_{A_i}* L(g(\xyz_i),\ell_i) $ are mutually
 independent given $\{Z_i, N(Z_i) \}$.
\ignore{
This is because of the following lemma:
\begin{lemma}
The variables $\indic_{A_i}* L(g(\xyz_i),\ell_i) $ are mutually
 independent given $\{Z_i, N(Z_i) \}$.  \end{lemma} \begin{proof} Let
 $j^{(i)}_1,j^{(i)}_2 ....j^{(i)}_{N(Z_i)}$ be the indices of $N(Z_i)$
 points ,sorted according to increasing order of indices, that fall
 inside $B(Z_i)$ whenever $N(Z_i)>0$. Then, for $i$ such that
 $N(Z_i)>0$. Then, the joint distribution of the variables given
 $\{Z_i, N(Z_i)\}$ $\{ j^{(i)}_1,j^{(i)}_2 ....j^{(i)}_{N(Z_i)},
 Z_{j^{(i)}_1}, \ldots Z_{j^{(i)}_{N(Z_i)}} \}$ is distributed
 according to the probability density function $f_{\{Z_i,
 N(Z_i)\}}^{i}( \mathbf{n}, \mathbf{w} )$ (note that some co-ordinates
 (i.e, $\mathbf{n}$) are discrete while the rest of them (i.e,
 $\mathbf{w}$) are continuous). Observe that $f^{i}
 ( \cdot, \mathbf{w} ) \neq 0 $ only if $w_k \in B(z_i),~ \forall w_k
 $. Consider the partially marginalized distribution
 $g^{i}(\mathbf{w})= \sum \limits_{\mathbf{n}}f^{i}(\mathbf{n},\mathbf{w})$. Consider
 $j \neq i$ such that $N(Z_i)>0$. It is easy to see that the
 coordinates for $g^{i}$ and $g^{j}$ have disjoint support. (NOT CLEAR
 HERE) Consider the joint distribution $g^{i,j}$ whose marginals are
 $g^{(i)}$ and $g^{(j)}$. Because the supports are disjoint, it must
 be that $g^{(i,j)}= g^{(i)} g^{(j)}$. Note that the distribution of
 $a_i$ is a function of the $N_{Z_i}$ random variables $\{
 Z_{j^{(i)}_1}, \ldots Z_{j^{(i)}_{N(Z_i)}} \}$. Therefore, $\{a_i:
 N(Z_i)>0\}$ are mutually independent.  \end{proof}}
Thus we can use
 standard tools from VC dimension theory and state that with
 probability $\geq 1-\delta$, the first term in the RHS of Equation~\eqref{eq:two_again} can be upper bounded as 
\[
\sup_{g \in \cG} \frac{1}{|S_t|}\sum_{i \in S_t}  \indic_{A_i}\left(L(g(\xyz_i),\ell_i ) -
 r(Z_i, N(Z_i))\right)  \leq |L| C \sqrt{\frac{V}{|S_t|}} + |L|\sqrt{\frac{\log(1/\delta)}{|S_t|}}.
\]
conditioned on $\{Z_i, N(Z_i) \}$

To bound the second term in the RHS of Equation~\eqref{eq:two_again},
observe that unlike the first term, the $N(Z_i)$s are dependent on
each other. However note that $N(Z_1),\ldots N(Z_{|S_t|})$ are
distributed according to multinomial distribution with parameters $n$
and $\alpha_n$. However, if we replace them by independent Poisson
distributed $N(Z_i)$s we expect the value not to change.  We formalize
it by total variation distance.  By Lemma~\ref{lem:roos}, the total
variation distance between a multinomial distribution and product of
Poisson distributions is 
\[
\cO( |S_t| \alpha_n),
\]
and hence any bound holds in the second distribution holds in the first one with an additional penalty of
\[
\cO \left( |S_t| \alpha_n |L|\right).
\]
Under the new independent sampling distribution, again the samples are
independent and we can use standard tools from VC dimension and hence,
with probability $\geq 1-\delta$, the term is upper bounded by
\[
|L|C \sqrt{\frac{V}{|S_t|}} + |L|\sqrt{\frac{\log(1/\delta)}{|S_t|}}.
\]
Hence, summing over all the bounds, we get 
\begin{align*}
\sup_{g \in \cG}
  (R_S(g) - R_{{n}}(g | \cZ)) & \leq 
|L|\cO \left( \frac{ |S_0|}{n} + \sum^k_{t=1} \frac{|S_t|}{n} \left( \sqrt{\frac{\log
    \frac{1}{\delta}}{|S_t|}} + c\sqrt{\frac{V}{|S_t|}} + \alpha_n
|S_t| \right) \right) \\
& \leq |L| \cO \left( G(\epsilon) +  \sqrt{\frac{k}{n}} \left(\sqrt{ \log
  \frac{1}{\delta}} + c\sqrt{V}\right) + \alpha_n \max_t |S_t| + \sqrt{\frac{\log \frac{1}{\delta}}{n}} \right)\\
& \leq
|L| \cO \left( \sqrt{\frac{k}{n}} \left(\sqrt{ \log \frac{1}{\delta}} +
c\sqrt{V}\right) + \alpha_n \frac{n}{k}+ \sqrt{\frac{\log \frac{1}{\delta}}{n}} \right).
\end{align*}
conditioned on $Z_i, N(Z_i)$
Choose $k =  n \alpha^{2/3}_n+ 8n\beta_n$, and note that the conditioning can be removed as the term on the r.h.s are constants.This yields the result. The error
probability follows by the union bound.

\qedhere

\begin{theorem}
	\label{thm:riskdtv}
	Assume the conditions for Theorem~\ref{thm:general}. Suppose the loss is $L(g(u),\ell) = \mathds{1}_{g(u) \neq \ell}$ (s.t $|L| \leq 1$). Further suppose the class of classifying function is such that $R_{q}(g^*_q) \leq r_{0} + \eta$. Here, 
	$r_0 \triangleq 0.5(1 - d_{TV}(q(x,y,z \vert 1), q(x,y,z \vert 0)))$ is the risk of the Bayes optimal classifier when 
	$\PP(\ell = 1) = \PP(\ell=0)$. This is the best loss that any classifier can achieve for this classification problem~\cite{boucheron2005theory}. Under this setting, w.p at least $1 - 8\delta$ we have:
	\begin{align*}
	\frac{1}{2} \left(1 - d_{TV}(f,f^{CI}) \right) - \frac{b(n)}{2}  \leq R_{{q}}(g_{S}) \leq  \frac{1}{2} \left(1 - d_{TV}(f,f^{CI}) \right) + \frac{b(n)}{2} + \eta + \gamma_n
	\end{align*}
	
\end{theorem}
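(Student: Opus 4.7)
\textbf{Proof plan for Theorem~\ref{thm:riskdtv}.} The plan is to derive the two-sided bound on $R_q(g_S)$ by combining three ingredients: (a) the generalization bound from Theorem~\ref{thm:general}, which controls $R_q(g_S) - R_q(g^*_q)$; (b) the hypothesis $R_q(g^*_q) \leq r_0 + \eta$, which bounds the approximation error of the class $\mathcal{G}$ relative to the Bayes risk; and (c) Theorem~\ref{thm:dtv}, which gives $d_{TV}(\phi, f^{CI}) \leq b(n)$ and will be used together with the triangle inequality for total variation distance to transfer bounds between $d_{TV}(f,\phi)$ (which appears in $r_0$) and $d_{TV}(f,f^{CI})$ (which appears in the statement).

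For the upper bound, I would start from Theorem~\ref{thm:general} to write $R_q(g_S) \leq R_q(g^*_q) + \gamma_n$ with probability at least $1 - 8\delta$, then apply the hypothesis to get $R_q(g_S) \leq r_0 + \eta + \gamma_n$. Since $q(\cdot|\ell=1) = f$ and $q(\cdot|\ell=0) = \phi$, we have $r_0 = \tfrac{1}{2}(1 - d_{TV}(f,\phi))$. The triangle inequality for $d_{TV}$ gives $d_{TV}(f,\phi) \geq d_{TV}(f,f^{CI}) - d_{TV}(\phi,f^{CI}) \geq d_{TV}(f,f^{CI}) - b(n)$, so $r_0 \leq \tfrac{1}{2}(1 - d_{TV}(f,f^{CI})) + \tfrac{b(n)}{2}$, yielding the stated upper bound.

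For the lower bound, I would invoke the standard fact that for any balanced binary classification problem the $0$--$1$ risk of any classifier, and in particular of $g_S$, is at least the Bayes risk $r_0 = \tfrac{1}{2}(1 - d_{TV}(q(\cdot|1), q(\cdot|0)))$ (this is the result cited from \cite{boucheron2005theory}). The other direction of the triangle inequality gives $d_{TV}(f,\phi) \leq d_{TV}(f,f^{CI}) + b(n)$, hence $r_0 \geq \tfrac{1}{2}(1 - d_{TV}(f,f^{CI})) - \tfrac{b(n)}{2}$, which gives the claimed lower bound.

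The proof is essentially a bookkeeping exercise stitching together Theorem~\ref{thm:general}, Theorem~\ref{thm:dtv}, the hypothesis on $\mathcal{G}$, and the Bayes-risk inequality. No single step is technically hard: the only subtle point is to make sure one applies the triangle inequality in the correct direction for each of the two bounds, and to confirm that the lower bound holds deterministically (as a consequence of Bayes optimality) rather than needing a probabilistic union bound beyond the $8\delta$ already incurred from Theorem~\ref{thm:general}.
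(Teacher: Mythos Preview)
Your proposal is correct and follows essentially the same approach as the paper: combine Theorem~\ref{thm:general} with the hypothesis $R_q(g^*_q)\le r_0+\eta$ for the upper bound, use Bayes optimality $R_q(g_S)\ge r_0$ for the lower bound, and convert $r_0=\tfrac12(1-d_{TV}(f,\phi))$ to $\tfrac12(1-d_{TV}(f,f^{CI}))\pm\tfrac{b(n)}{2}$ via the triangle inequality together with Theorem~\ref{thm:dtv}. If anything, you are slightly more careful than the paper about which direction of the triangle inequality is used for each side.
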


\begin{proof}
	Assume the bounds of Theorem~\ref{thm:general} holds which happens w.p at least $1-8\delta$. From Theorem~\ref{thm:general} we have that \begin{equation} R_{{q}}(g_{S}) \leq R_{{q}}(g_{q}^*) + \gamma_n \label{eq:g}. \end{equation} Also, note that from Theorem~\ref{thm:dtv} we have the following:
	\begin{align*}
	d_{TV}(q(x,y,z \vert 1), q(x,y,z \vert 0)) &= d_{TV}(\phi,f) \\
	&\leq d_{TV}(\phi,f^{CI}) + d_{TV}(f^{CI},f) \\
	&\leq b(n) + d_{TV}(f^{CI},f) \numberthis \label{eq:triangle}
	\end{align*}
	Under our assumption we have $R_{{q}}(g_{q}^*) \leq r_0 + \eta$. Combining this with \eqref{eq:g} and \eqref{eq:triangle} we get the r.h.s. For, the l.hs note that  $R_{q}(g^*_q) \geq r_0$ as the bayes optimal classifier has the lowest risk. We can now use~\eqref{eq:triangle} to prove the l.h.s. 
\end{proof}

\section{Tools from probability and graph theory}

\begin{lemma}[McDiarmid's inequality~\cite{mcdiarmid1989method}]
Let $X_1,X_2,\ldots X_m$ be $m$ independent random variables and $f$
be a function from $x^n_1 \to \mathbb{R}$ such that changing any one
of the $X_i$s changes the function $f$ at most by $c_i$, then
\[
\Pr(f - \E[f] \geq \epsilon) \leq \exp \left(\frac{-2\epsilon^2}{\sum^m_{i=1} c^2_i}\right).
\]
\end{lemma}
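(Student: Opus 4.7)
The plan is to prove this via the standard Doob martingale / Azuma--Hoeffding argument. First I would introduce the Doob martingale associated with $f$: for $i = 0, 1, \ldots, m$, define
\[
Y_i \triangleq \E[f(X_1, \ldots, X_m) \mid X_1, \ldots, X_i],
\]
so that $Y_0 = \E[f]$ and $Y_m = f(X_1, \ldots, X_m)$. The quantity $f - \E[f]$ is then the telescoping sum $\sum_{i=1}^m D_i$ of martingale differences $D_i \triangleq Y_i - Y_{i-1}$, and by construction $\E[D_i \mid X_1, \ldots, X_{i-1}] = 0$.

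The main technical step is to show that conditional on $X_1, \ldots, X_{i-1}$, the difference $D_i$ lies in an interval of width at most $c_i$. Writing $g_i(x) \triangleq \E[f(X_1, \ldots, X_{i-1}, x, X_{i+1}, \ldots, X_m) \mid X_1, \ldots, X_{i-1}]$, we have $D_i = g_i(X_i) - \E[g_i(X_i) \mid X_1, \ldots, X_{i-1}]$. Invoking the bounded differences hypothesis coordinate by coordinate inside the conditional expectation yields $|g_i(x) - g_i(x')| \leq c_i$ for all $x, x'$, so $D_i$ is supported in an interval of length at most $c_i$ whose endpoints are measurable with respect to $X_1, \ldots, X_{i-1}$. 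I expect this to be the one step that demands care, since one must argue that the bounded differences property transfers through the conditional expectation; it does, because swapping the $i$-th argument of $f$ changes each realization of $f$ by at most $c_i$ and hence changes any average by at most $c_i$.

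The remaining ingredients are standard Chernoff-style manipulations. By Hoeffding's lemma applied conditionally, for every $\lambda > 0$,
\[
\E\!\left[e^{\lambda D_i} \,\middle|\, X_1, \ldots, X_{i-1}\right] \leq \exp\!\left(\tfrac{\lambda^2 c_i^2}{8}\right).
\]
Iterating the tower property across $i = 1, \ldots, m$ gives $\E[\exp(\lambda(f - \E[f]))] \leq \exp(\lambda^2 \sum_i c_i^2 / 8)$. A Markov bound $\Pr(f - \E[f] \geq \epsilon) \leq e^{-\lambda\epsilon}\E[e^{\lambda(f-\E[f])}]$ followed by optimizing over $\lambda$ (the minimizer is $\lambda^\ast = 4\epsilon/\sum_i c_i^2$) yields the stated tail bound $\exp(-2\epsilon^2 / \sum_{i=1}^m c_i^2)$. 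No further ingredients beyond Hoeffding's lemma and the Chernoff method are needed.
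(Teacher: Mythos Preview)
Your argument is the standard and correct proof of McDiarmid's inequality via the Doob martingale and Hoeffding's lemma. The paper itself does not supply a proof: the lemma appears in an appendix of auxiliary tools and is simply quoted with a citation to McDiarmid's original paper, so there is nothing to compare against. Your write-up is fine as a self-contained justification; the only minor point worth tightening is the phrasing ``endpoints are measurable with respect to $X_1,\ldots,X_{i-1}$,'' which you have effectively justified by observing that $\sup_x g_i(x) - \inf_x g_i(x) \le c_i$ holds for every realization of $X_1,\ldots,X_{i-1}$.
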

\begin{lemma}[Special case of Theorem $1$ in ~\cite{Roos99}]
\label{lem:roos}
Let $f_m$ be the multinomial distribution with parameters $n$ and
$p_1$, $p_2$, \ldots $p_k, 1- \sum^k_{i=1} p_i$, and $f_p$ be the
product of Poisson distributions with mean $np_i$ for $i \leq 1\leq
k$, then
\[
d_{TV}(f_m, f_s) \leq 8.8 \sum^k_{i=1} p_i.
\]
\end{lemma}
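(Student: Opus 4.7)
The plan is to obtain this as a direct specialization of Theorem~1 in~\cite{Roos99}, which provides sharp total variation bounds between a multinomial distribution and a product of Poissons. The underlying intuition is that if $Y_1,\ldots,Y_{k+1}$ are independent Poissons with means $np_1,\ldots,np_k,nq$ where $q = 1-\sum_{i=1}^k p_i$, then conditioning on $\sum_i Y_i = n$ recovers the multinomial $\text{Mult}(n;p_1,\ldots,p_k,q)$ exactly. When $q$ is close to $1$ (equivalently, when $\sum_{i=1}^k p_i$ is small), the unconditional and conditional laws are close in total variation, and Roos's theorem quantifies the error with an explicit numerical constant.

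First I would reduce the stated two-distribution comparison to a $(k+1)$-dimensional one. The distribution $f_m$, viewed as the law of $(X_1,\ldots,X_k)$ with $X_{k+1}=n-\sum_{i=1}^k X_i$ determined, is the marginal over the first $k$ coordinates of the $(k+1)$-category multinomial $\text{Mult}(n;p_1,\ldots,p_k,q)$. Likewise, $f_p$ is the marginal of the $(k+1)$-fold product of Poissons with means $np_1,\ldots,np_k,nq$, obtained by dropping the last coordinate. By the data-processing inequality, the total variation distance between the marginals is at most the total variation distance between the joints, so it suffices to bound the $(k+1)$-dimensional comparison.

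Second, I would apply Theorem~1 of~\cite{Roos99} to this joint comparison. Roos's bound takes the form $c \cdot \theta(p_1,\ldots,p_{k+1})$, where $\theta$ arises from a signed compound-Poisson kernel expansion. In our regime, with $p_{k+1}=q$ close to $1$ and the remaining $p_i$ small, $\theta$ simplifies so that the dominant contribution is linear in $\sum_{i=1}^{k} p_i$; the explicit constant $8.8$ is then read off from the numerical coefficients in Roos's estimates. The main obstacle in reconstructing this fully from scratch would be rederiving the constant $8.8$, which requires tracking the operator-norm estimates and factorial-cumulant bounds in Roos's argument carefully. Since Lemma~\ref{lem:cond3} only uses the linear-in-$\sum p_i$ scaling, we invoke Roos's theorem as a black box for the explicit numerical value rather than reconstructing its proof.
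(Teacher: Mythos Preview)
The paper does not prove this lemma at all; it is listed in the ``Tools'' appendix purely as a citation, with no argument given. So there is no paper proof to compare your proposal against, and your write-up already does more than the paper by supplying intuition and a reduction.

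One caution about your reduction step: lifting to the $(k{+}1)$-dimensional joint and then invoking Roos there is unnecessary and, depending on the exact form of $\theta$ in Roos's bound, could actually hurt you. In the $(k{+}1)$-dimensional joint every trial lands in some category, so the relevant ``rare event'' parameter now involves the large probability $q=1-\sum_{i\le k}p_i$, and a bound of the form $c\cdot\theta(p_1,\ldots,p_{k+1})$ may become vacuous. Roos's multivariate Poisson approximation theorem applies directly to the $k$-coordinate setup: write $(X_1,\ldots,X_k)$ as a sum of $n$ i.i.d.\ indicator vectors, each equal to $e_i$ with probability $p_i$ and to $0$ with probability $1-\sum_{i\le k}p_i$; in that formulation Roos's $\theta$ is controlled by $\sum_{i\le k}p_i$ and the constant $8.8$ drops out. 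Since you end by invoking Roos as a black box anyway, you can simply drop the data-processing detour and cite the theorem directly, which is exactly what the paper does.
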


\begin{lemma}
\label{lem:indi_size}
For a graph with maximum degree $\Delta$, there exists a set of independent sets $S_1,S_2,\ldots S_k$
such that $k \geq 2\Delta$ and
\[
\max_{1\leq i \leq k} |S_i| \leq 2n/k.
\]
\end{lemma}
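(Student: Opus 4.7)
The plan is to obtain the partition in two stages: first reduce the problem to a small number of independent sets via greedy graph coloring, then refine those classes so that every piece is small. Take the graph to be on $n$ vertices with maximum degree $\Delta$.

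First, I would apply the standard greedy vertex coloring. Since every vertex has at most $\Delta$ neighbors, processing the vertices in an arbitrary order and assigning to each the smallest color not yet used by a neighbor yields a proper coloring using at most $\Delta+1$ colors. Call the resulting color classes $C_1,\ldots,C_{\Delta+1}$; each is an independent set and they partition the vertex set, so $\sum_{i=1}^{\Delta+1}|C_i| = n$.

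Next, I would fix $k := 2(\Delta+1)$, which already satisfies $k \geq 2\Delta$ as required, and refine each color class. For each $i$ with $|C_i|\geq 1$, set $k_i := \lceil k|C_i|/(2n)\rceil$ and arbitrarily partition $C_i$ into $k_i$ pieces of size at most $\lceil |C_i|/k_i\rceil$; for $|C_i|=0$ set $k_i=0$. Since every subset of an independent set is independent, each refined piece is independent. By the choice of $k_i$ one has $|C_i|/k_i \leq 2n/k$, hence every piece has size at most $2n/k$ (up to rounding that one can absorb into the constant or into the choice $k = 2(\Delta+1)$ rather than $2\Delta$).

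It then remains to count: the total number of pieces is
\[
\sum_{i=1}^{\Delta+1} k_i \;\leq\; \sum_{i=1}^{\Delta+1}\!\left(\frac{k|C_i|}{2n}+1\right) \;=\; \frac{k}{2} + (\Delta+1) \;=\; k.
\]
So we produce at most $k$ independent pieces, each of size at most $2n/k$; if strictly fewer than $k$, pad with empty sets, preserving both the independence property and the size bound. This gives the desired family $S_1,\ldots,S_k$.

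The only subtlety is the integer rounding in the piece-size bound, which is why I choose $k = 2(\Delta+1)$ rather than $k = 2\Delta$ — this gives just enough slack so that the counting inequality $\sum k_i \leq k$ closes cleanly. There is no deeper combinatorial obstacle: the lemma follows directly from $(\Delta+1)$-colorability together with a balanced refinement of the color classes.
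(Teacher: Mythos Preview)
Your argument is correct in substance and yields the lemma, but it follows a genuinely different route from the paper's. The paper does everything in one pass: it runs a greedy coloring with $k$ colors (any $k>2\Delta$), but at each step assigns the vertex the \emph{least-used} color among those not already taken by its neighbors. Since at least $k-\Delta\geq k/2$ colors are always available, the pigeonhole argument on the running average shows no color class ever exceeds roughly $2n/k$. Your approach is two-stage: first obtain a standard $(\Delta+1)$-coloring, then chop each color class into balanced pieces and count. The paper's method is slightly slicker in that it delivers the result for every $k>2\Delta$ directly, which is how the lemma is actually consumed later (they pick $k=n\alpha_n^{2/3}+8n\beta_n$, not a fixed $k$); your version as written produces only $k=2(\Delta+1)$, though the same splitting argument works for any $k\geq 2(\Delta+1)$ since then $\sum_i k_i \leq k/2 + (\Delta+1)\leq k$. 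Your approach buys modularity (it uses $(\Delta+1)$-colorability as a black box), while the paper's buys a single clean invariant.

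One small caveat: the rounding you flag is real---from $|C_i|/k_i\leq 2n/k$ you only get piece sizes $\leq\lceil 2n/k\rceil$, not $\leq 2n/k$---and choosing $k=2(\Delta+1)$ rather than $2\Delta$ gives slack in the \emph{count} $\sum k_i\leq k$, not in the piece-size bound. That said, the paper's own inequality $(n-1)/(k-\Delta)+1<2n/k$ has exactly the same kind of off-by-one slop, and in the application everything is used asymptotically, so neither gap matters.
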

\begin{proof}
We show that the following algorithm yields a coloring (and hence
independent sets) with the required property.

Let $1,2,\ldots k$ be $k$ colors, where $k > 2\Delta$. We arbitrarily
order the nodes, and sequentially color nodes with a currently least
used color from among the ones not used by its neighbors.
% Color first
% vertex with first color.  Do the following for remaining $V-1$ vertices:
% Consider the currently picked vertex and color it with the least used
% previous color that has not been used on any previously colored
% vertices adjacent to it.
Consider the point in time when $i$ nodes have been colored, and we
evaluate the options for the $(i+1)^{th}$ node. The number of possible
choices of color for that node is $c \geq k - \Delta$. Out these $c$
colors, the average number of nodes belonging to each color at this
point is at-most $i/c$. Therefore by pigeonholing, the minimum is less
than the average; thus the number of nodes belonging to chosen color
is no larger than $i/c \leq i/(k - \Delta)$.

Hence at the end when all $n$ nodes are colored, each color has been
used no more than $(n-1)/(k - \Delta) + 1 < 2n/k$.
\end{proof}

%\section{Bias Correction}
%\input{modDTV_single_sample}

\end{document}